\newcommand{\citet}[1]{\citeauthor{#1}~\shortcite{#1}}
\newcommand{\citep}{\cite}
\newlist{Properties}{enumerate}{2}
\setlist[Properties]{label=Property \arabic*.,itemindent=*}
\newcommand{\mat}[1]{\text{\bf #1}}
\def\tr{\qopname\relax n{tr}}
\begin{document}
%
\title{Compressing Deep Neural Networks via Layer Fusion}
\author{James O' Neill\\
Department of Computer Science \\
University of Liverpool\\
Liverpool, L69 3BX England\\
}

\author{James O' Neill,\textsuperscript{1}
Greg Ver Steeg\textsuperscript{2} \& 
Aram Galstyan \textsuperscript{2}\\
\textsuperscript{1}{Department of Computer Science,
University of Liverpool, Liverpool, L69 3BX England}\\
\textsuperscript{2}{USC Information Sciences Institute, Marina del Rey, California, 90292, United States of America}\\
\texttt{james.o-neill@liverpool.ac.uk} \\
\{\texttt{gregv, galstyan}\}\texttt{@isi.edu}}

\maketitle
\begin{abstract}
\begin{quote}
This paper proposes \textit{layer fusion} - a model compression technique that discovers which weights to combine and then fuses weights of similar fully-connected, convolutional and attention layers. Layer fusion can significantly reduce the number of layers of the original network with little additional computation overhead, while maintaining competitive performance. From experiments on CIFAR-10, we find that various deep convolution neural networks can remain within 2\% accuracy points of the original networks up to a compression ratio of 3.33 when iteratively retrained with layer fusion. For experiments on the WikiText-2 language modelling dataset where pretrained transformer models are used, we achieve compression that leads to a network that is 20\% of its original size while being within 5 perplexity points of the original network. We also find that other well-established compression techniques can achieve competitive performance when compared to their original networks given a sufficient number of retraining steps. Generally, we observe a clear inflection point in performance as the amount of compression increases, suggesting a bound on the amount of compression that can be achieved before an exponential degradation in performance. 
\end{quote}
\end{abstract}

\section{Introduction}\label{sec:intro}
Deep neural networks (DNNs) have made a significant impact in fields such as Computer Vision (CV)~\citep{he2016deep,densenet} and Natural Language Processing (NLP) ~\citep{vaswani2017attention,devlin2018bert}. This has been accelerated due to numerous innovations. For example, Residual Networks (ResNets) ~\citep{he2016deep}, that are often employed in CV, use skip connections to avoid the vanishing gradient problem in very deep networks, while batch normalization  ~\citep{ioffe2015batch,ba2016layer} and layer normalization~\citep{ba2016layer} are used to reduce the effects of shifts in the training and test data distributions. Tangentially for NLP, Transformer networks have shown great success due to the use of self-attention~\citep{vaswani2017attention}. 
Transformers have shown significant performance improvements over Recurrent Neural Networks with internal memory (RNNs)~\citep{hochreiter1997long} for sentence representations, language modelling and conditional text generation~\citep{radford2018improving,dehghani2018universal,dai2019transformer} and various other natural language understanding (NLU) tasks. Similarly, deep Convolutional Neural Networks (CNNs)~\citep{krizhevsky2012imagenet} have improved performance on image classification~\citep{krizhevsky2012imagenet}, image segmentation~\citep{long2015fully}, speech recognition~\citep{lecun1995convolutional} and been widely adopted in the machine learning (ML) community.
However, large overparameterized networks require more compute, training time, storage and leave a larger carbon footprint~\citep{strubell2019energy}. Previous work on model compression has mainly focused on deploying compressed models to mobile devices~\citep{han2015deep,wu2016quantized}. However, moving models from multi-GPU training to single-GPU training is now too a salient challenge, in order to relax the resource requirements for ML practitioners and allow a wider adoption of larger pretrained CNNs and Transformers within the community.
DNNs becoming increasingly deeper leads us to ask the following questions: \textit{are all layers of large pretrained models necessary for a given target task ? If not, can we reduce the network while preserving network density during retraining in a computationally efficient manner?}.
We are also motivated to fuse layers based on findings that show whole layers can be distinctly separated by their importance in prediction~\citep{zhang2019all}. Earlier work on CNNs found that some layers may become redundant in very deep networks~\cite{he2016deep}, essentially copying earlier layers and performing identity mappings for the redundant layers. While residual connections have ameliorated these problems to some degree (not only in residual networks e.g Transformers), we assert that there may still be significant overlap between layers of large overparameterized networks.


Guided by Occam's razor~\citep{blumer1987occam}, one can use various compression techniques (e.g pruning, tensor decomposition, knowledge distillation, quantization) post-training to find smaller networks from pretrained models. However, many of the existing compression techniques are unstructured~\citep{karnin1990simple,hassibi1993second,han2015learning}, resulting in a sparse model. This is a practical limitation since sparse networks require more conditional operations to represent which elements within each parameter matrix are zero or non-zero. Current GPU libraries such as CuSPARSE (accounting for recent improvements~\citep{argueta2019accelerating}) are far slower than CuBLAS~\citep{sanders2010cuda} and fundamentally, current hardware is not designed to optimize for sparse matrix operations.
In contrast, knowledge distillation~\citep{hinton2015distilling,mishra2017apprentice,ashok2017n2n}, and quantization~\citep{polino2018model} preserve network density, avoiding the necessity for specialized sparse matrix libraries to utilize the benefits of smaller and faster networks. However, quantization leads to quantization error and requires approximate methods to compute partial derivatives during retraining~\citep{agustsson2017soft} and knowledge distillation requires more memory to store and train the smaller network.
Weight sharing reduces the network size and avoids sparsity, however it is unclear which weights should be shared and it cannot be used when the model is already pretrained with untied weights.
The noted drawbacks of the aforementioned compression methods further motivates to seek an alternative structured compression method that preserves network density while identifying and removing redundancy in the layers. This brings us to our main contributions.

\paragraph{Contributions}
We propose \textit{layer fusion} (LF). LF aims to preserve information across layers during retraining of already learned models while preserving layer density for computational efficiency. 
Since aligning paths, layers and whole neural networks is non-trivial (neurons can be permuted and still exhibit similar behaviour and we desire an invariance to orthogonal transformations), we also propose \textit{alignment}  measures for LF. This includes (1) a Wasserstein distance metric to approximate the alignment cost between weight matrices and (2) numerous criteria for measuring similarity between weight covariance matrices. We use these measures as LF criteria to rank layer pairs that are subsequently used to fuse convolutional, fully-connected and attention layers. This leads to both computational and performance improvements over layer removal techniques, network pruning and shows competitive results compared to tensor-decomposition and unsupervised-based knowledge distillation. In our experiments, we report LF using different fusion approaches: layer freezing, averaging and random \textit{mixing}. 
 %
%
Lastly, we report results on using structured compression for large pretrained transformers and CNNs and provide experimental results of different compression methods with and without retraining. Thus, we identify the importance of retraining pretrained models.  

\section{Related Work}\label{sec:struct_rep_sim}

\subsection{Layer Structure \& Importance}
~\citet{zhang2019all} have recently analysed the layer-wise functional structure of overparameterized deep models to gain insight into why deep networks have performance advantages over their shallow counterparts. They find that some layers are salient and that once removed, or reinitialized, have a catastrophic effect on learning during training and subsequently generalization. In contrast, the remaining layers once reset to their default initialization has little effect. This suggests that parameter and norm counting is too broad of a measure to succinctly study the generalization properties in deep networks. 
These findings also motivate LF, as we posit that important layers are more distinct and therefore will be less similar, or harder to align with other layers, while more redundant layers may be good candidates for fusing layers. 

Recently,~\citet{frankle2018lottery} empirically showed that there exists trained subnetworks that when re-initialized to their original configuration produce the same performance as the original network in the same number of training epochs. They also posit that stochastic gradient descent (SGD) seeks out a set of \textit{lottery tickets} (i.e well-initialized weights that make up a subnetwork that when trained for the same number of epochs as the original network, or less, can reach the same out-of-sample performance) and essentially ignores the remaining weights. 
We can further conjecture from~\citet{zhang2019all} findings, that perhaps SGD more generally seeks out important layers, which we analogously refer to as \textit{lottery pools}. Identifying whole layers that are significantly distinguished from others, in terms of their influence on learning, further motivates us to pursue the merging or freezing of layers in DNNs. 

\subsection{Computing Layer Similarity}
~\citet{kornblith2019similarity} have focused on computing similarity between different neural representations (i.e the activation output vector for a given layer). However, we view this comparison between layers as slightly limiting, since information is lost about what weights and bias led to the activation outputs. Moreover, directly comparing neural network weights allows us to avoid sampling inputs to compute the activations. In contrast, work focusing on representational similarity across networks \citet{li2016convergent,kornblith2019similarity}, we are instead comparing weight matrices within the same network. Directly comparing weights and biases allow us to better approximate alignments and similarities for dense networks and has the advantage that we do not require data to be fed-forward through the network post-training to measure similarity within or across networks, unlike representational similarity (i.e output activations).

Structured Dropout~\citet{fan2019reducing} proposes to randomly drop whole layers during training time and at test time they can choose a subnetwork which can be decided based on performance of different combinations of pruned networks on the validation set or based on dropout probabilities learned for each layer throughout training. 

~\citet{singh2019model} measure model similarity across neural networks using optimal transport-based metrics. In contrast, our work measures intra-network similarity and we make a distributional assumption that allows us to use such metric to be used efficiently during retraining and to scale for large networks. 


\section{Methodology}

\textbf{Preliminaries}
We define a dataset as $\{D = (\vec{x}_i, \vec{y}_i): i = 1, \ldots T\}$ that contains $T$ tuples of an input vector $\vec{x} \in \R^n$ and a corresponding target $\vec{y} \in \{0 ,1\}^p$. We define any arbitary sample as $s := (\vec{x}, \vec{y})$ where $s \in D$. We consider a neural network $f_{\theta}(\vec{x})$ with pretrained parameters $\theta := ( \mat{W}_1,  \mat{W}_2,\ldots \theta_{\ell} \ldots, \theta_L)^T$. Here $\theta_{\ell}:= \{\mat{W}_{\ell}, \vec{b}_{\ell}\}$ where $\mat{W}_{\ell} \in \mathbb{R}^{n_{\ell} \times n_{\ell+1}}$, $\vec{b} \in \mathbb{R}^{n_{\ell+1}}$ where $n_\ell$ denotes the dimension size of the $\ell$-th layer. Thus, a standard fully-connected $f_{\theta}$ is expressed as,

\begin{equation}
f_{\theta}(\vec{x}) := \mat{W}_{L}g\Big( \ldots g\big(\mat{W}_{2}g (\mat{W}_{1}\vec{x} + \vec{b}_1\big) + \vec{b}_2 \Big) + \vec{b}_{L}
\end{equation}

with smooth asymptotic nonlinear function $g(\cdot)$ (e.g hyperbolic tangent) that performs elementwise operations on its input. The input to each subsequent layer as $\vec{z}_{\ell} \in \mathbb{R}^{n_{\ell}}$ where $\vec{x} \coloneqq \vec{z}_0$ for $m$ number of units in layer ${\ell}$ and the corresponding output activation as $\vec{T}_{\ell} = g(\vec{z}_{\ell})$. 
The loss function is defined as $\mathbb{L}_{\theta}(D):= \frac{1}{T}\sum_{i=1}^N \mathcal{L}(\vec{y}_i, f_\theta(\vec{x}_i))$ where for a single sample $s_i$, $\mathcal{L}: \mathcal{Y} \times \mathbb{R}^n \to \mathbb{R}$. 
A pruned $\theta_{\ell}$ post-training is denoted as $\theta^p_{\ell}$ and a tensor decomposed $\theta_{\ell}$ is expressed as $\tilde{\theta}_{\ell}$ where $\tilde{\mat{W}}_{\ell} \in \R^{d_{\ell} \times d_{\ell+1}}$ and $\tilde{\vec{b}}_{\ell} \in \R^{d_{\ell+1}} $ and $d \ll n$. 
A network pruned by layer as a percentage of the lowest weight magnitudes is denoted as $f^{lp}_{\acute{\theta}}$ where the pruned weights $\acute{\theta} \subset \theta$. A percentage of the network pruned by weight magnitudes across the whole network is denoted as $f^{gp}_{\acute{\theta}}$ (i.e global pruning). Lastly, a post layer fused network $f_{\Theta}$ has fused parameters $\Theta$.

\subsection{Desirable Properties of a Layer Similarity Measure}
Ideally, we seek a measure that can compare weight matrices that are permutable and of varying length. Formally, the main challenges with aligning weight matrices $\mat{W}: = \{\mat{W}_0,\ldots, \mat{W}_{\ell}, \ldots \mat{W}_{L}\}$ of different layers is that, when vectorized as $\mathtt{vec}(\mat{W}_{\ell}) \in \mathbb{R}^{n_{\ell}(n_{\ell+1})}$, $\mat{W}_{\ell}$ can be permuted and still exhibit the same behavior at the output. Hence, if $|\mat{W}_{\ell}| \neq |\mat{W}_{\ell+1}|$, the measure $S$ must allow for multisets of different lengths and permutations. 
Invariance to rotations, reflections and scaling are all desirable properties we aim to incorporate into measuring similarity between weight matrices. However, invariance to linear transformations has issues when there are more parameters in a layer than training samples, as pointed out by~\citet{kornblith2019similarity}.
Eventhough our work mainly focuses on large pretrained models, we too seek a LF measure that is invariant to orthogonal transformations to overcome the aforementioned issues i.e for a similarity function $s(\cdot, \cdot)$, $s(\mat{W}_{i}, \mat{W}_{j}) =
s(\mat{W}_{i} \mat{U}, \mat{W}_{j} \mat{V})$ for full-rank orthonormal matrices $\mat{U}$ and $\mat{V}$ such that $\mat{U}^{T}\mat{U} = \mat{I}$ and $\mat{V}^{T}\mat{V} = \mat{I}$. More importantly, invariance to orthogonal transformation relates to permutation invariance~\citep{orhan2017skip} which is a property we account when measuring similarity to fuse weight matrices. We now describe a set of measures we consider for aligning and measuring the similarity of layers. 


\subsection{Layer Alignment \& Layer Similarity}\label{sec:lalignment}

\paragraph{Covariance Alignment}
The first layer fusion measure we consider is covariance alignment (CA). CA accounts for correlated intra-variant distances between layers, which can indicate some redundancy, although their overall distributions may differ and therefore may be good candidates for LF. Hence, we consider the Frobenius norm (denoted as subscript $F$) between pairs of weight covariance matrices $\vec{\Sigma}_{\tilde{\mat{W}}_1}, \vec{\Sigma}_{\tilde{\mat{W}}_2}$ and expectation $\mathbb{E}[{\mat{W}}_1] = \mathbb{E}[{\mat{W}}_2] =0$. This forms a Riemannian manifold of non-positive curvature over the weight covariances. We first consider the cosine distance as the distance measures between parameter covariance matrices as Equation \ref{eq:cos_cov}, where $||\Sigma_{\mat{W}}||_F = [\text{tr}(\Sigma_{\mat{W}}^{T}\Sigma_{\mat{W}})]^{1/2}$.

\begin{equation}\label{eq:cos_cov}
\text{cos}(\vec{\Sigma}_{\mat{W}_1}, \vec{\Sigma}_{\mat{W}_2}) = \frac{\tr\big(\vec{\Sigma}_{\mat{W}_1} \cdot \vec{\Sigma}_{\mat{W}_2}\big)}{||\vec{\Sigma}_{\mat{W}_1}||_F ||\vec{\Sigma}_{\mat{W}_2}||_F}
\end{equation}

If we assume both weight matrices are drawn from a normal distribution $\mat{W}_{1} \sim \mathcal{N}(\mu, \sigma_1), \mat{W}_{2} \sim \mathcal{N}(\mu, \sigma_2)$ with identical means $\mu= \mu_{\mat{W}_1}= \mu_{\mat{W}_2}$, the KL divergence between their covariance matrices can be expressed as:

\begin{equation}\label{eq:kl_cov}
\text{KL}(\mathbf{\Sigma}_{\mat{W}_1}||  \mathbf{\Sigma}_{\mat{W}_2}) = \frac{1}{2}\Big[\tr \big(\vec{\Sigma}_{\mat{W}_2}^{-1} \vec{\Sigma}_{\mat{W}_1})-\ln \big(\frac{|\vec{\Sigma}_{\mat{W}_1}|}{|\vec{\Sigma}_{\mat{W}_2}|}\big)\Big] 
\end{equation}

The symmetrized KL divergence between positive semi-definite matrices (e.g covariances) also acts as the square of a distance~\cite{moakher2006symmetric} (see supplementary for further details, including descriptions of other covariance similarity measures).
We consider both Equation \ref{eq:cos_cov} and Equation \ref{eq:kl_cov} for fusing convolutional layers, self-attention layers and fully-connected layers. The KL is an asymmetric measure, therefore the divergence in both directions can be used to assign a weight to each layer pair in layer fusion. 

\paragraph{Optimal Transport \& Wasserstein Distance}
Unlike an all-pair distance such as CA, Wasserstein (WS) distance can also be used to find the optimal cost, also known as the optimal flow between two distributions. Unlike, other distance measures, WD tries to keep the geometry of the distributions intact when interpolating and measuring between distributions. Unlike CA and other baseline measures, WS is invariant to layer permutations and like CA, it also accounts for mutual dependencies between parameters in any arbitrary layer.
In this work, we consider the WD between adjacent row-normalized parameter pairs $\text{softmax}(\mat{W}_{1}, \mat{W}_{2})$ (i.e multisets) in a Euclidean metric space. Given two multi-sets $\mat{W}_{1}, \mat{W}_{2} \subset \mat{W}$, of size $d=|\mat{W}_{1}|=|\mat{W}_{2}|$ with corresponding empirical distributions $P_{\mat{W}_1}$ and $P_{\mat{W}_2}$, the WS distance is defined as Equation \ref{eq:wass}. However, computing WS distance is $\mathcal{O}(N^3)$ using the standard Hungarian algorithm, which is intractable for large $\theta$.


\begin{equation}\label{eq:wass}
W_p(P_{\mat{W}_1}, P_{\mat{W}_2}) = \inf_{\pi}\Big(\sum_{i=1}^d ||P_{\mat{W}_{1}^{i}} - P_{\mat{W}_{2}^{\pi(i)}}||^{p} \Big)^{1/p}
\end{equation}

One way to tradeoff this computational burden is to assume that the weights are i.i.d and normally distributed at the expense of disregarding mutual dependencies learned throughout training. 
According to Lyapunov’s central limit theorem~\citep{lehmann2004elements}, we can assume the the weights in a layer are normally distributed. 
Hence, if $P_{\mat{W}_1} = N( \mu_{\mat{W}_1}, \vec{\Sigma}_1)$ and $P_{\mat{W}_2} = N( \mu_{\mat{W}_2}, \vec{\Sigma}_2)$ we can express the 2-WS distance as Equation \ref{eq:wasser}, also known as the Bures metric\footnote{Often used in quantum physics for measuring quantum state correlations~\citep{forrester2016relating}.}.

\begin{equation}\label{eq:wasser}
\begin{aligned}
\mathbb{W}^2(P_{ \mat{W}_1}, P_{ \mat{W}_2}) = ||\mu_{ \mat{W}_1} - \mu_{ \mat{W}_2}||^2 + \mathbb{B}^2(\vec{\Sigma}_{ \mat{W}_1}, \vec{\Sigma}_{ \mat{W}_2}), \\
\mathbb{B}^2(\vec{\Sigma}_{ \mat{W}_1}, \vec{\Sigma}_{ \mat{W}_2}) = \text{tr}(\vec{\Sigma}_{ \mat{W}_1}) + \text{tr}(\vec{\Sigma}_{ \mat{W}_2}) - \\
2\text{tr}\Big[\big(\sqrt{\vec{\Sigma}_{ \mat{W}_1}} (\vec{\Sigma}_{ \mat{W}_2}\sqrt{\vec{\Sigma}_{ \mat{W}_1}})\big)\Big]
\end{aligned}
\end{equation}

Although we focus on the Bures metric in our experiments, an alternative approach is to find a set of cluster centroids in $ \mat{W}_1$ and $ \mat{W}_2$ as $C_{ \mat{W}_2}$ and $C_{ \mat{W}_2}$ and compute $W(P_{C_{ \mat{W}_1}}, P_{C_{ \mat{W}_2}})$. In this approach the centroids are converted to an empirical distribution $P_{C_{\theta}}$ such $c \ll d$ such that a $\mathcal{O}(N^3)$ cost is feasible for computing during retraining steps. Alternatively, we could avoid softmax normalization and directly compute $W(C_{ \mat{W}_1}, C_{ \mat{W}_2})$ on both discrete sets. Lastly, we note that when fusing layers with 2-WS distance, the fusion occurs between aligned weights given by the cost matrix. Hence, it is not only used to identify top-k most similar layers, but the cost matrix also aligns which weights in the layer pair are fused.

\subsection{Fusing Layers}\label{sec:lfusion}

After choosing the top-k layer pairs to merge, we then consider 3 ways to fuse the layers: (1) freeze one of the two layers (i.e do not compute gradients for one of the two layers), (2) take the mean between layer pairs and compute backprop on the averaged layer pair and (3) sample and mix between the layers. Choosing the layers to fuse for (1) is based on which of the two is closest to the middle layer of the network. This is motivated by previous work that showed layers closer to the input and output are generally more salient~\citep{zhang2019all}. When using Jenson-Shannon divergence for choosing top-k layers, we use the divergence asymmetry for choosing which layer is frozen. This is achieved by taking the parameter $\gamma$ between the Jenson-Shannon divergence of two layers in both directions to control a weighted gradient. We express the backpropogation when using LF with Jenson-Shannon divergence in terms of KL-divergences as shown in Equation \ref{eq:layer_fused_backprop}, where $\bar{\mat{W}}_{ij}$ is a mixture distribution between ${\mat{W}}_i$ and ${\mat{W}}_j$ with a weighted gradient $\partial \mathcal{L}/\partial\tilde{\mat{W}}_{ij}$ that represents the gradient for both ${\mat{W}}_i$ and ${\mat{W}}_j$. Thus, for the backward pass of a frozen layer from a given top-k pair, we still compute its gradients which will influence how its original pair will be updated. This constraint ensures that the original pair that were most similar for a given compression step remain relatively close throughout retraining. The layer pair are then averaged at test time to reduce network size, while maintaining similarity using the aforementioned JS divergence gradient constraint. 

\begin{equation}\label{eq:layer_fused_backprop}
\begin{gathered}
\frac{\partial \mathcal{L}}{\partial\tilde{\mat{W}}_{ij}} := \gamma \Big(\frac{\partial L}{\mat{W}_i} \Big) + \big(1 - \gamma \big)\Big( \frac{\partial L}{\mat{W}_j} \Big) \quad s.t, \\
\gamma = \frac{1}{2}\Big( \text{KL}(\mat{W}_i||\bar{\mat{W}}_{ij}) + \text{KL}(\mat{W}_j||\bar{\mat{W}}_{ij}) \Big)
\end{gathered}
\end{equation}

For (2), updates during training when using , we constrain the gradients to be the average of both layers and then average the resulting layers at the end of retraining. 
For (3), we interpolate between hidden representations that are most similar, which can be viewed as a stochastic approach of JS divergence used in (1), to remove redundancy in the network. We denote a pair of randomly mixed layers as $\tilde{\mat{W}}^{i}_{\ell} \sim \mathbb{B}\big( \mat{W}^{i}_{\ell}, \mat{W}_{\ell+1} \big) \quad \forall i \in n_{\ell}$. Note that we only mix between pairs of weight matrices, the bias terms are averaged $(\vec{b}_{\ell}+\vec{b}_{\ell})/2$. We then compute backpropogation on $\tilde{\theta}^{i}_{\ell}$ instead of the original unmixed layer pair $(\theta {i}_{\ell},\tilde{\theta}^{i}_{\ell+1})$ i.e mixing is carried out before the forward pass.


\section{Experimental Details}
We focus on transformer-based models for language modelling on the WikiText-2 dataset~\citep{merity2016pointer}. For large models in NLP such as BERT~\citep{devlin2018bert}, OpenAI-GPT, GPT2~\citep{radford2018improving} and Transformer-XL~\citep{yang2019xlnet}, we freeze or combine layer weights of each multi-attention head component and intermediate dense layers, dramatically reducing the respective number of layer and weights.
For image classification on CIFAR10, we report results for ResNet, ResNet-ELU~\citep{shah2016deep}, Wide-ResNet~\citep{Zagoruyko2016WRN} and DenseNet~\citep{huang2017densely}. We are particularly interested in ResNet architectures, or more generally, ones that also use skip connections. This is motivated by ~\citet{veit2016} which found that deleting or permuting residual blocks can be carried out without much degradation in performance in a pretrained ResNet.

\subsection{Compression Without Reraining}

\begin{figure*}[htp]
 \centering
 \subfigure[No Retraining]{\includegraphics[height=0.25\textheight,width=.46\textwidth]{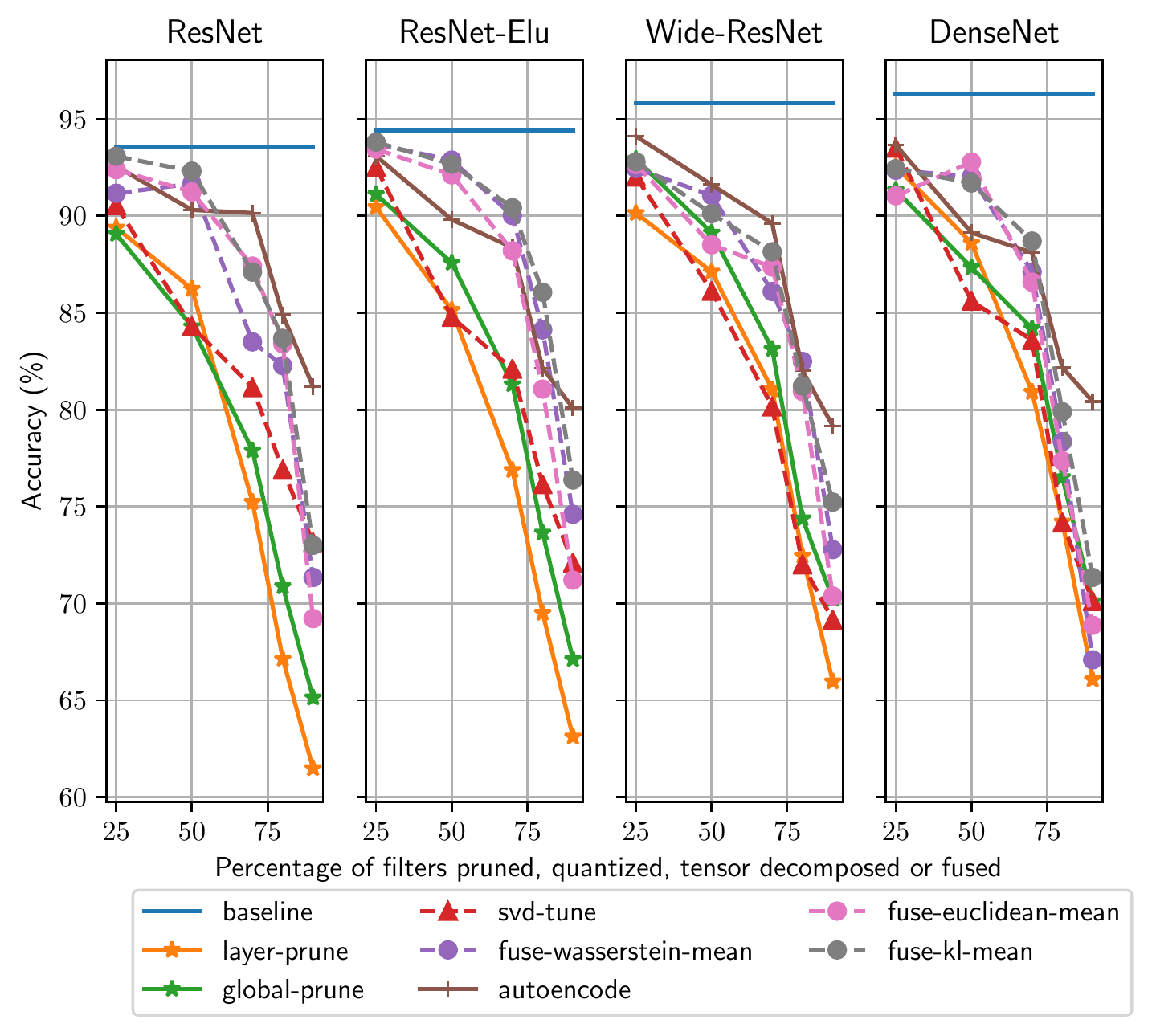}\label{fig:no_retraining_cifar}}
 \subfigure[Retraining]{\includegraphics[height=0.25\textheight,width=.46\textwidth]{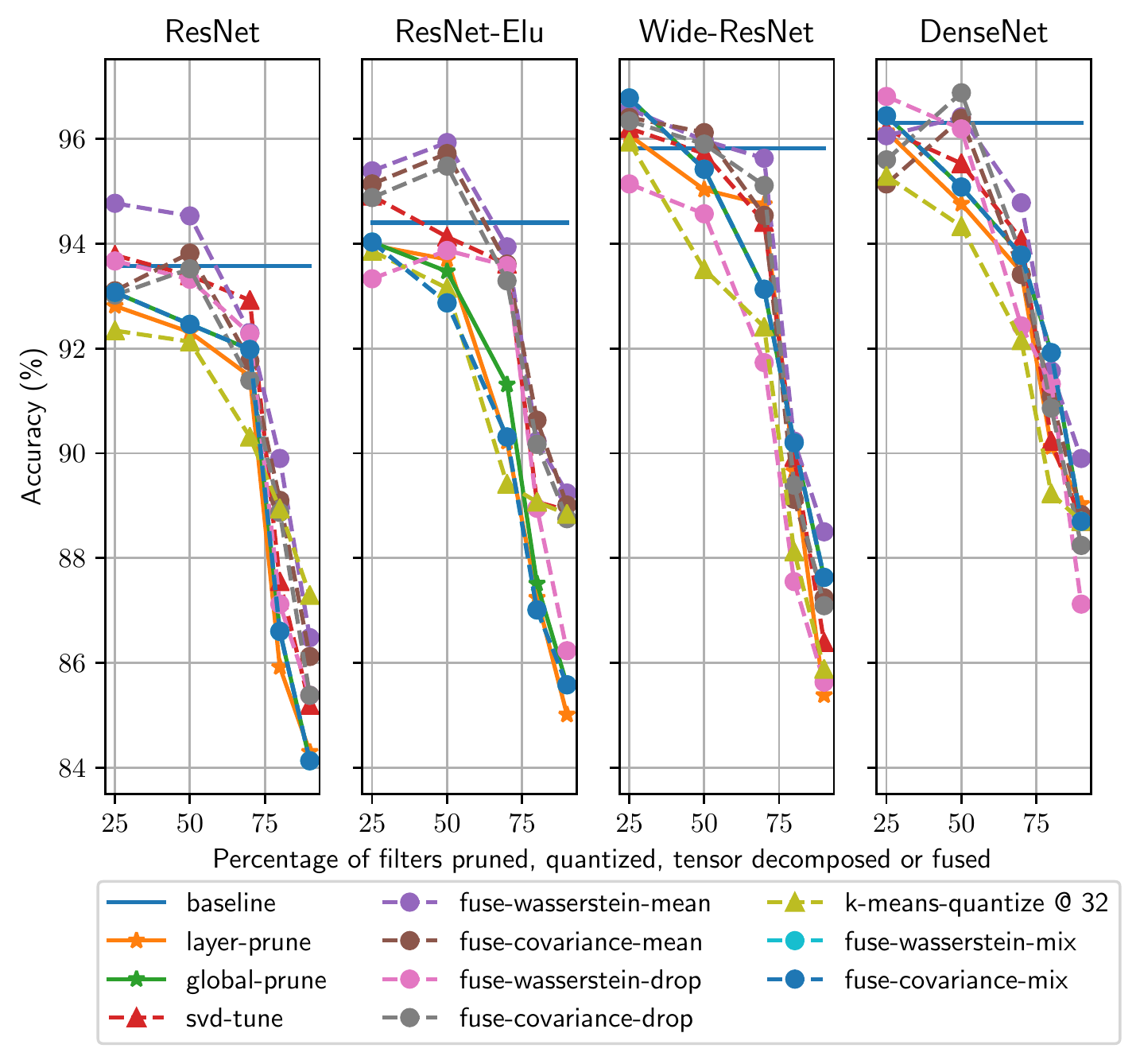}\label{fig:retraining_cifar}}
 \caption{CIFAR-10 Test Accuracy with and without Retraining}
\end{figure*}

For magnitude-based pruning, we prune a percentage of the weights with the lowest magnitude. This is done in one of two ways: a percentage of weights pruned by layer (layer pruning), or a percentage of the network as whole (global pruning). For quantization, we use k-means whereby the number of clusters for a given layer is specified as a percentage of the original size of that layer (i.e number of parameters in the tensor).
For tensor decomposition, we reduce the number of parameters by approximating layers with a lower rank using singular value decomposition (SVD). Specifically, we use randomized truncated SVD~\citep{halko2011finding} where QR factorization on $\mat{W}_{\ell}$ such that $\mat{Q}^T_{\ell} \mat{W}_{\ell} = \mat{R}_{\ell}$ where $\mat{Q}_{\ell}$ are the orthonormal columns of $\mat{W}_{\ell}$. Randomized methods are used to approximate the range of $\theta_{\ell}$ and reduce computation from $\mathcal{O}(\min(n_{\ell-1}n_{\ell}^2,n_{\ell-1}^2 n_{\ell}))$ to $\mathcal{O}(n_{\ell-1}n_{\ell} \log(k))$ where $k$ represents the approximate rank of $\theta_{\ell}$. We also perform dimensionality reduction on the layers by using 1-hidden layer denoising autoencoders which use the same activation functions for reconstruction as the original architecture and a mean squared error loss is minimized. The encoder layer of each denoising AE (DAE) is the used in replacement of the original layer. For both truncated SVD and DAE, this is carried out sequentially from bottom to top layer so that the reconstruction of a given layer $l$ also accounts for cascade approximation errors of dimensionality reduction from previous layers. We refer to this type of layer reconstruction technique as \textit{student rollout} because the pretrained teacher network is iteratively rolled out and reconstructed from the first layer to the last. 

\begin{table}
\caption{CIFAR-10 Test Accuracy with WS-Based CNN LF}
 \label{tab:cifar_wass_merge_types}
\begin{center}
\begin{small}
\begin{sc}
\begin{adjustbox}{max width=.45\textwidth}
\begin{tabular}{lc|cc|cc|cc|cc}
\toprule[2.pt]

& & \multicolumn{2}{c|}{Res} & \multicolumn{2}{c|}{Res-ELU} & \multicolumn{2}{c|}{Wide-Res} & \multicolumn{2}{c}{DenseNet} \\


\midrule
\midrule
\multicolumn{2}{l}{\textbf{Orig.}} & 93.75 & - & 94.40 & - & 95.82 & - & 96.31 & - \\

\midrule

\parbox[t]{2mm}{\multirow{4}{*}{\rotatebox[origin=c]{90}{\textbf{ \small{Mean}}}}} 
& \textbf{$25 \%$} & 92.39 & 94.77 & 93.45 & 95.39 & 92.66 & 96.57 & 91.04 & 96.06 \\
& \textbf{$50 \%$} & 91.24 & 94.53 & 92.12 & 95.93 & 88.51 & 95.97 & 92.78 & 96.42\\
& \textbf{$75 \%$} & 87.41 & 92.30 & 88.20 & 93.94 & 87.36 & 95.63 & 86.58 & 94.78\\
& \textbf{$80 \%$} & 83.40 & 89.90 & 81.06 & 90.23 & 80.94 & 90.23 & 77.36 & 88.50 \\
& \textbf{$90 \%$} & \textbf{69.22} & \textbf{86.48} & \textbf{71.20} & \textbf{89.24} & \textbf{70.38} & \textbf{89.90} & \textbf{68.87} & \textbf{91.57}\\

\midrule

\parbox[t]{5mm}{\multirow{4}{*}{\rotatebox[origin=c]{90}{\textbf{ \small{Freeze}}}}} 
& \textbf{$25 \%$} & 91.17 & 93.67 & 93.71 & 93.33 & 92.45 & 95.14 & 92.36 & 96.81\\
& \textbf{$50 \%$} & 91.67 & 93.32 & 92.88 & 93.87 & 91.06 & 94.57 & 92.03 & 96.19\\
& \textbf{$75 \%$} & 83.50 & 92.28 & 90.02 & 93.58 & 86.10 & 91.73 & 87.11 & 92.43\\
& \textbf{$80 \%$} & 82.27 & 87.12 & 84.12 & 88.95 & 82.49 & 87.55 & 78.35 & 85.63\\
& \textbf{$90 \%$} & \textbf{71.34} & \textbf{85.38} & \textbf{74.60} & \textbf{86.23} & \textbf{72.78} & \textbf{85.63} & \textbf{67.09} & \textbf{87.12} \\
\midrule

\parbox[t]{5mm}{\multirow{4}{*}{\rotatebox[origin=c]{90}{\textbf{ \small{Mixing}}}}} 
& \textbf{$25 \%$} & 93.67 & 93.22 & 93.33 & 94.03 & 95.14 & 96.78 & 96.81 & 96.44 \\
& \textbf{$50 \%$} & 93.32 & 92.46 & 93.24 & 93.87 & 94.57 & 95.42 & 96.19 & 95.08 \\
& \textbf{$70 \%$} & 92.28 & 91.98 & 90.31 & 93.58 & 91.73 & 93.13 & 92.43 & 93.78 \\
& \textbf{$80 \%$} & 84.12 & 90.60 & 85.58 & 88.95 & 87.55 & 90.20 & 91.32 & 91.92 \\
& \textbf{$90 \%$} & \textbf{73.2} & \textbf{86.13} & \textbf{73.50} & \textbf{85.58} & \textbf{80.63} & \textbf{87.63} & \textbf{78.12} & \textbf{88.70} \\

\bottomrule[2pt]

\end{tabular}
\end{adjustbox}
\end{sc}
\end{small}
\end{center}
\vskip -0.1in
\end{table}

\subsection{Layer Fusion \& Compression ReTraining}

\begin{table*}[ht]
\centering
\captionsetup{justification=centering, margin=0cm}
 \caption{WikiText-2 Test Perplexity without fine-tuning or retraining.}\label{tab:untrained_pruning_all_results}
\resizebox{1.\linewidth}{!}{%
\begin{tabular}{lc|ccc|ccc|ccc|ccc}
\toprule[2.pt]

& & Trans-XL & GPT-2 & GPT & Trans-XL & GPT-2 & GPT & Trans-XL & GPT-2 & GPT & Trans-XL & GPT-2 & GPT \\


\midrule
\midrule
\multicolumn{2}{l}{\textbf{Original}} & 21.28 & 26.61 & 67.23 & 21.28 & 26.61 & 67.23 & 21.28 & 26.61 & 67.23 & 21.28 & 26.61 & 67.23 \\

\midrule
& & \multicolumn{3}{c}{Layer Pruning via Weight Magnitude} & \multicolumn{3}{c}{Global Pruning via Weight Magnitude} & \multicolumn{3}{c}{Randomized SVD} & \multicolumn{3}{c}{Denoising AutoEncoder} \\
\midrule

& \textbf{@ $10 \%$} & 21.25 & 25.44 & 69.33 & 21.15 & 25.04 & 69.54 & 20.29 & 25.44 & 69.33 & 19.69 & 23.14 & 65.14 \\

& \textbf{@ $20 \%$} & 21.26 & 27.02 & 88.19 & 21.08 & 27.03 & 79.33 & 20.69 & 27.02 & 88.19 & 19.43 & 24.46 & 81.08\\

& \textbf{@ $30 \%$} & 22.05 & 35.87 & 1452.96 & 21.54 & 46.15 & 140.22 & 21.68 & 35.87 & 1452.96 & 20.57 & 29.07 & 921.06\\

& \textbf{@ $50 \%$} & 57.12 & 1627.22 & 3260.52 & 53.90 & 3271.52 & 2159.42 & 64.12 & 1627.22 & 3145.41 & 55.07 & 1258.05 & 2654.88\\
& \textbf{@ $70 \%$} & 3147.31 & 24946.66 & 21605.02 & 901.534 & 13464.17 & 18068.86 & 3679.13 & 26149.57 & 22140.12 & 2958.41 & 19206.78 & 15.60\\

\midrule[1.2pt]

& & \multicolumn{3}{c}{Layer Averaging (Euclidean Distance)} & \multicolumn{3}{c}{Layer Freezing (Euclidean Distance)} & \multicolumn{3}{c}{Global WS-LF} & \multicolumn{3}{c}{Adjacent WS-LF} \\
\midrule

& \textbf{@ $10 \%$} & 21.74 & 25.78 & 81.14 & 23.09 & 28.70 & 83.44 & 22.15 & 25.79 & 69.29 & 22.52 & 25.58 & 69.90 \\
& \textbf{@ $20 \%$} & 22.21 & 29.74 & 94.80 & 25.19 & 30.88 & 94.32 & 22.37 & 27.38 & 90.70 & 22.61 & 27.35 & 89.77 \\
& \textbf{@ $30 \%$} & 25.27 & 38.90 & 1903.14 & 27.81 & 40.01 & 97.11 & 24.79 & 38.18 & 1533.24 & 22.82 & 36.11 & 1493.37 \\
& \textbf{@ $50 \%$} & 62.04 & 1807.31 & 3724.47 & 64.38 & 1944.51 & 3790.12 & 61.68 & 1690.31 & 3123.39 & 59.70 & 1691.23 & 3357.02 \\
& \textbf{@ $70 \%$} & 3695.01 & 2631.52 & 29117.82 & 3583.16 & 23583.10 & 30258.78 & 3201.97 & 25130.30 & 22448.15 & 3198.16 & 25270.21 & 21732.58\\

\bottomrule[2pt]
\end{tabular}%
}
\end{table*}

%

For retraining we consider two main schemes: (1) for each retraining step we carry out network compression (e.g via pruning), retrain the resulting network and iteratively repeat until the final epoch, and (2) in the case where network compression leads to non-zero weights (e.g LF), we freeze the network weights apart from those which have been identified for LF in which case we retrain before tying. 

Layer averaging, mixing and freezing are experimented with for fusing layers. To maintain uniformity across each compression step, we prune, quantize, fuse and decompose a percentage of the weights as opposed to using other criteria such as thresholding. This ensures a consistent comparison across the compression methods (e.g thresholding weights in pruning does not have a direct equivalent to quantization or weight decomposition, unless we dynamically reduce the network size proportional to the number of weights pruned via thresholding).

\section{Results}

\subsection{Image Classification}

\textbf{No Retraining} Figure \ref{fig:no_retraining_cifar} shows the results of pruning, quantization, weight decomposition and our proposed LF without any retraining. A general observation is that an exponential decline in performance occurs at around 70\% (some variance depending on the compression method) of the original network is compressed. For example, fusing layers using the WS distance for alignment allows accuracy to be closer to the original networks accuracy up to 70\%. In contrast, pruning convolutional layers in ResNet models leads to a faster accuracy drop. This is somewhat surprising given that unstructured pruning is less restrictive, when applying LF to CNN architectures. We also allow filters from the same layer to be fused, in comparison to dense layers in self-attention blocks for Transformers.

\textbf{Retraining} In Figure \ref{fig:retraining_cifar} we see the results of model compression methods retraining on CIFAR-10 for ResNet-50, ResNet-50 with exponential linear units (ELUs), Wide-ResNet and DenseNet. We test each combination of layer pairs for averaging layers as $\tilde{\theta}_{i} = \tilde{\theta}_{j} = (\theta_{i} + \theta_{j}/2)$ where $\binom{L}{2}$ are the total number of layers (e.g 24 layers results in 276 possible pairs). The performance change is measured from the original network when layer averaging by choosing the top $L \times \%$ and measuring which averaged layer pair produced the smallest difference in accuracy when compared to the original network. In the case that the same layer within the top $L \times \%$ is coupled with more than one other layer, we simply take the mean of multiple pairs. This reduces computation to $2\binom{L}{L \times \% }$.

We find a re-occuring pattern that early on, retraining up to a reduction of 25\% of the network improves the results, and even up to 25\% - 50\% in some cases (e.g global pruning and layer pruning). From 75\% we see a significant decrease in performance, typically 2-4\% drop in accuracy percentage points across each model. Given an allowance of $N$ retraining epochs, allocating the amount of model compression for each compression step is a critical hyperparameter. Concretely, less retraining time is necessary for during initial model compression, whereas past a compression ratio of 3.33 (i.e 75\%), the interval between retraining steps should become larger. This is highlighted in bold across Table \ref{tab:cifar_wass_merge_types}, where left subcolumns are with no retraining and right subcolumns are with retraining. For all fusion types (mean, freezing and mixing), we find a significant increase in accuracy after retraining. Mean layer fusion using the WS-2 distance outperforms freezing layers, while random layer mixing performs comparably to averaging. Layer mixing interpolates between neurons a top-k most similar layer pair. Hence, it can change the sign of some of the original incoming weights into the resulting mixed layer. Therefore, it is somewhat surprising that accuracy has remained relatively high, suggesting that similar layers have weights with a shared sign, not only a similar magnitude. 
%

\begin{figure}
\centering
 \includegraphics[width=1.\linewidth]{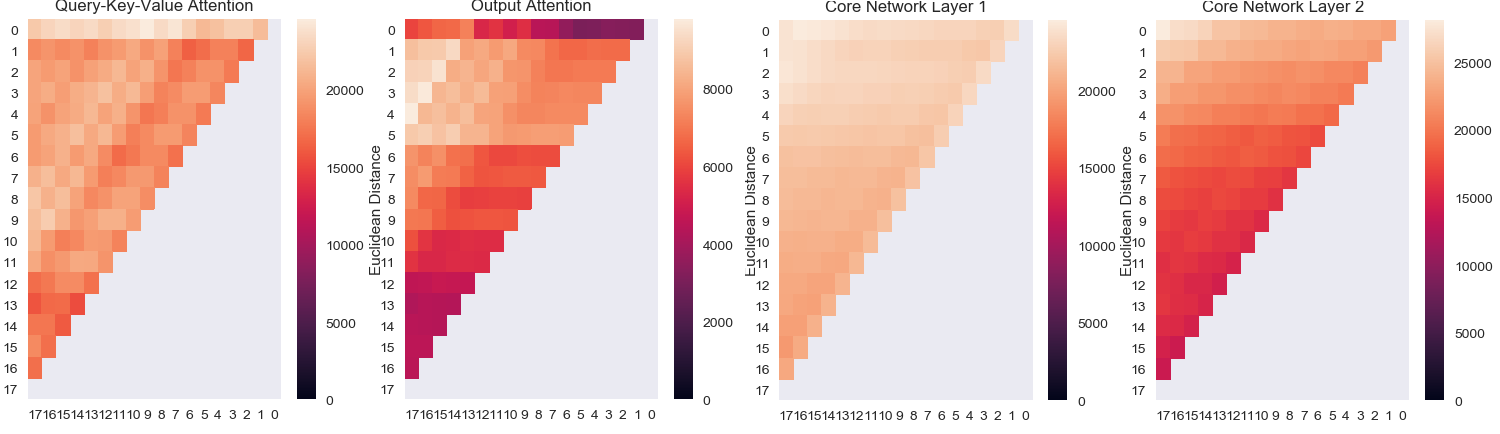} \caption{\small{Euclidean Distance Between Trans-XL Weights: (1) Query-Key-Value Attention, (2) Output Attention, (3, 4) FC Layers }}\label{fig:transxl_weight_sim}
\end{figure}

\subsection{Language Modelling}

\begin{figure}
\centering
\captionsetup{justification=centering,margin=1cm}
 \includegraphics[width=0.95\linewidth]{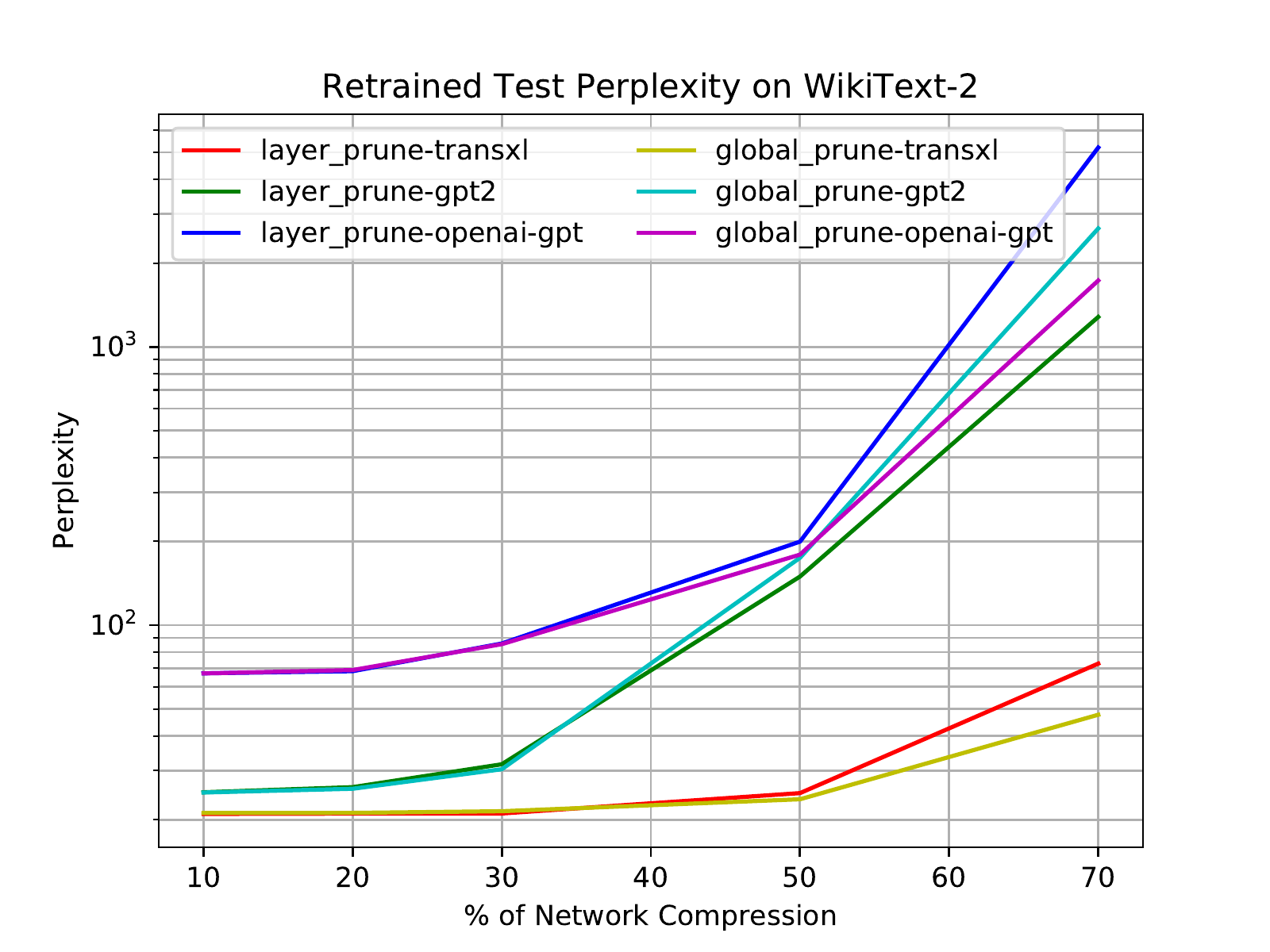}\caption{Wikitext-2 Language Modelling Pruning without Retraining}\label{fig:lm_untrained_results}
\end{figure}

We begin by showing the similarity between pretrained layers on Transformer-XL in Figure \ref{fig:transxl_weight_sim}, using sum of pairwise Euclidean distances. In general, we can see that closer layers have a smaller Euclidean distance. This more pronounced in the output attention (3) and fully connected layers (4) and slightly more sporadic among query-key-value attention weights (1).

\begin{figure*}
 \centering
 \subfigure[GPT]{\label{fig:gpt_retrain}\includegraphics[width=.33\textwidth]{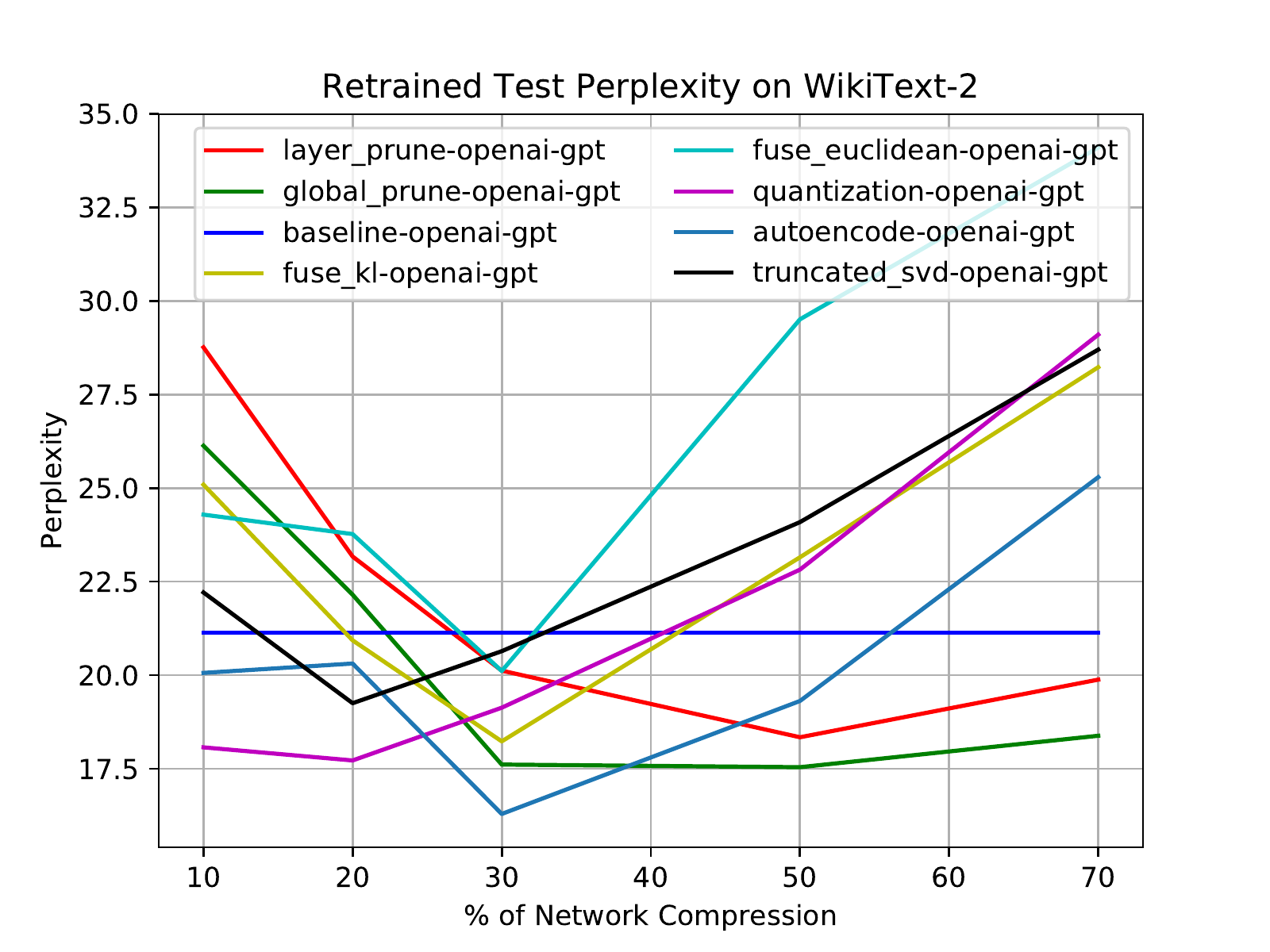}}
\subfigure[GPT-2]{\label{fig:gpt2_retrain}\includegraphics[width=.33\textwidth]{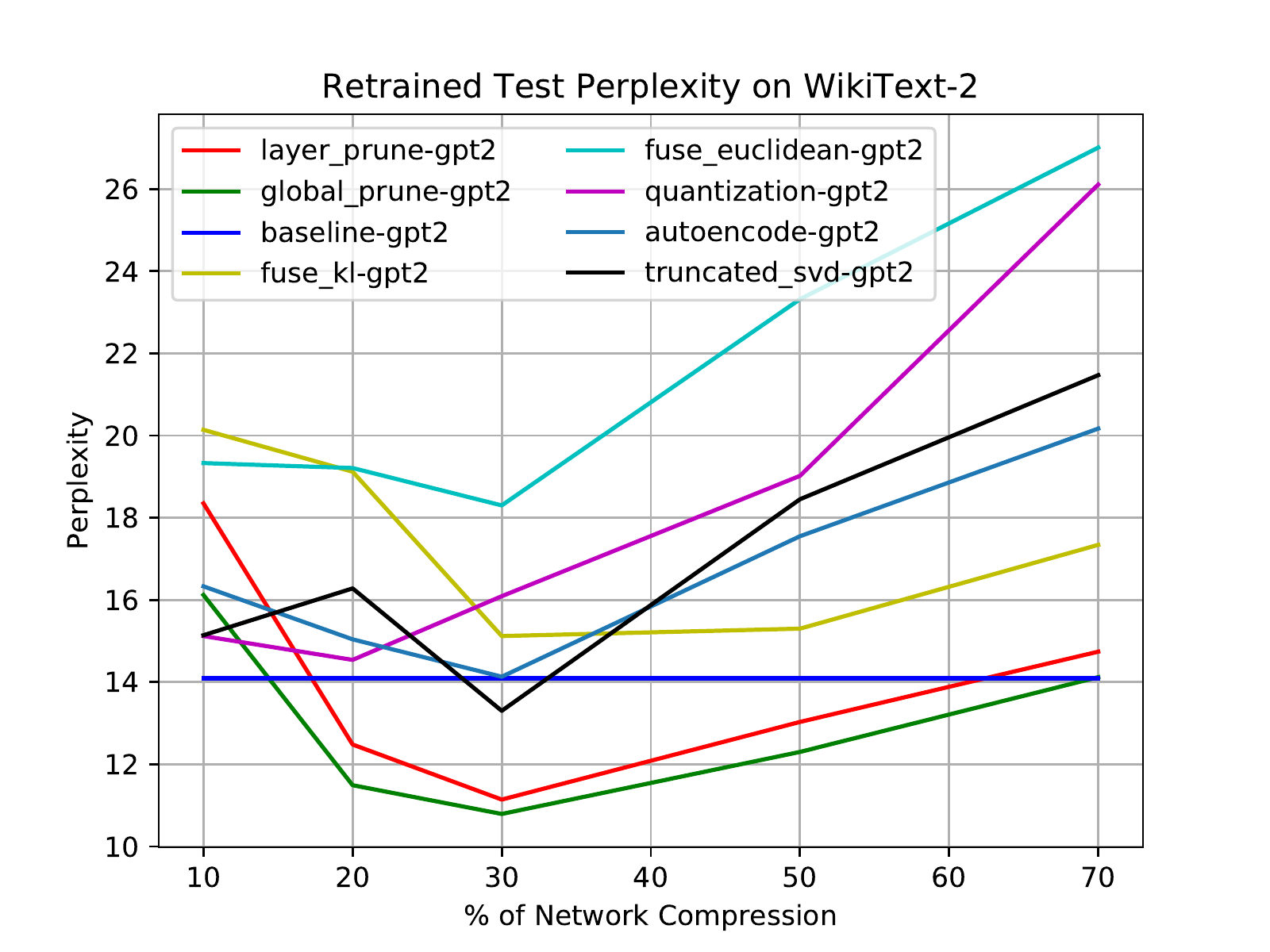}}
 \hfill
 \subfigure[Transformer-XL]{\label{fig:transxl_retrain}\includegraphics[width=.33\textwidth]{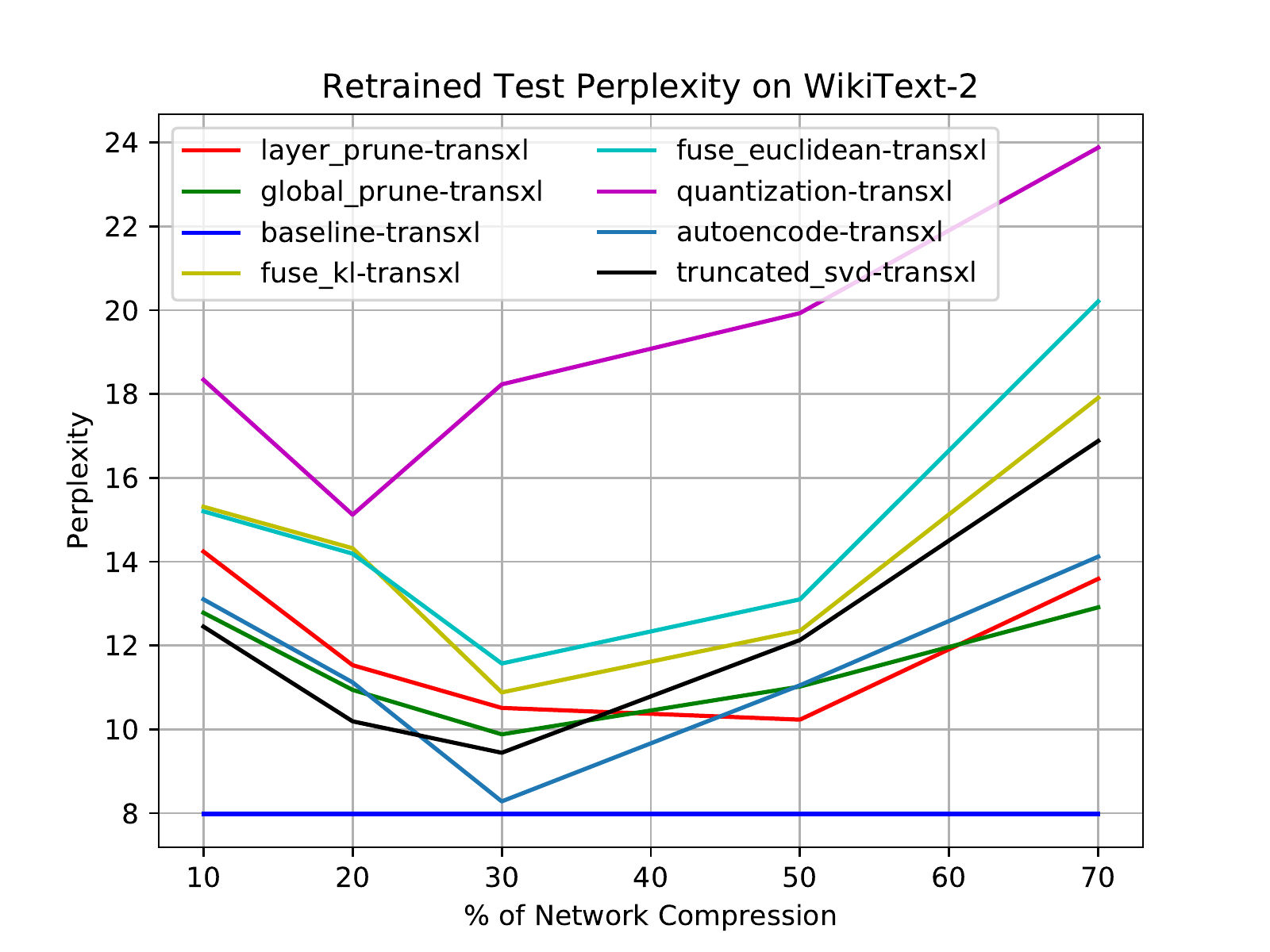}}
 \hfill
 \hfill
 \caption{Language Modelling Compression Results on WikiText-2 with Retraining}\label{fig:retrain_wiki}
\end{figure*}

In Figure \ref{fig:lm_untrained_results}, we find an exponential trend in perplexity (note the log-scale y-axis) increase with respect to the compression ratio for layer pruning and global pruning. Interestingly, Transformer-XL can maintain similar performance up to 50\% pruning from the pretrained model without any retraining. In contrast, we see that the original OpenAI-GPT is more sensitive and begins to show an exponential increase at 30\%. This insight is important for choosing the intervals between compress steps during iterative pruning, likewise for LF and tensor decomposition. Concretely, we would expect that the larger the increase in perplexity between compression steps, the more retraining epochs are needed. 
We also posit that this monotonically increasing trend in compression is related to the double descent phenomena~\citep{belkin2019reconciling}, whereby when more data is added or the model complexity is reduced, the network can fall back into the critical regime region~\citep{nakkiran2019deep} and even further into the underparameterized regime. This is reinforced by the fact that a large network such as Transformer-XL contains a smaller global weight norm of fully-connected layers in comparison to GPT and is able to maintain similar performance up to 50\% without retraining. 
Therefore, instead of choosing a constant $\%/N_c$ amount of compression at each compression step $N_c$, we allocate more compression earlier in retraining but more retraining steps later.

Figure \ref{fig:retrain_wiki} shows subfigures of retraining GPT (\ref{fig:gpt_retrain}), GPT-2 (\ref{fig:gpt2_retrain}) and Transformer-XL (\ref{fig:transxl_retrain}) with all aforementioned compression methods for GPT, GPT2 and Transformer-XL respectively. Firstly, we find retraining with a sufficient number of compression steps to be worthwhile for drastically reducing the network size while maintaining performance for both structured and unstructured approaches. Past 30\% of network reduction wee find a weakly linear increase, in contrast to the exponential increase with no retraining. We find that global pruning generally outperforms layer pruning as it doesn't restrict the percentage of weights pruned to be uniform across layers. This suggests that many layers are heavily pruned while others are preserved. This also coincides with findings from ~\citep{zhang2019all} that some layers are critical to maintain performance while removing the remaining layers has little effect on generalization. 
 %

 Table \ref{tab:fusion_criteria_results} shows the results of LF for compression ratio of 2 using layer averaging (Mean), layer freezing (Freeze) and mixing layers (-Mix) when ranking weight similarity using Euclidean distance (ED), KL and WS distance and CA. For all models CA produces the best results, slightly outperforming WS.


\begin{table}
\centering
\captionsetup{justification=centering, margin=0cm}
 \caption{WikiText-2 Perplexity after LF-Retraining (networks reduced to 50\% their original size)}\label{tab:fusion_criteria_results}
\resizebox{.7\linewidth}{!}{%
\begin{tabular}{lc|cccc}
\toprule[1.pt]

& & Mean & Freeze & Mix \\
\midrule
\multicolumn{2}{l|}{\textbf{TransXL-KL}} & 12.23 & 15.02 & 13.75 \\
\multicolumn{2}{l|}{\textbf{TransXL-ED}} & 13.08 & 17.13 & 14.88 \\
\multicolumn{2}{l|}{\textbf{TransXL-WS}} & 11.48 & 14.40 & 12.17 \\
\multicolumn{2}{l|}{\textbf{TransXL-CA}} & \textbf{11.13} & 13.97 & 14.73 \\

\midrule 
\multicolumn{2}{l|}{\textbf{GPT2-KL}} & 15.56 & 19.04 & 15.87 \\
\multicolumn{2}{l|}{\textbf{GPT2-ED}} & 16.03 & 21.14 & 16.73 \\
\multicolumn{2}{l|}{\textbf{GPT2-WS}} & \textbf{13.71} & 18.31 & 13.58 \\
\multicolumn{2}{l|}{\textbf{GPT2-CA}} & 14.03 & 21.28 & 14.59 \\

\midrule 
\multicolumn{2}{l|}{\textbf{GPT-KL}} & 23.57 & 28.01 & 24.82 \\
\multicolumn{2}{l|}{\textbf{GPT-ED}} & 25.07 & 29.68 & 24.73 \\
\multicolumn{2}{l|}{\textbf{GPT-WS}} & 19.10 & 23.17 & 18.90 \\
\multicolumn{2}{l|}{\textbf{GPT-CA}} & \textbf{18.48} & 22.01 & 20.39 \\

\bottomrule[1pt]
\end{tabular}%
}
\end{table}

\subsection{Additional Observations}
In language modelling, the effects of model reduction typically follow an exponential increase in perplexity for a compression ratio greater than 2 (corresponding to @50\%) when no retraining steps are used. Unlike CIFAR10 image classification, language modelling is a structured prediction task that has a relatively large output dimensionality which we posit has an important role in the amount of compression that can be achieved. ~\citet{yang2017breaking} have noted the \textit{softmax bottleck} whereby the restriction on the size of the decoder results in information loss when calibrating the conditional distribution, while ~\citep{belkin2019reconciling} have also noted the double descent phenomena is dependent on the number of classes. We conjecture that pruning and other such methods can exacerbate this bottlenecking and therefore the compression ratio will be generally lower compared to classification problems with relatively less classes, such as CIFAR-10. 

\section{Conclusion}

In this paper we proposed layer fusion, a new method for model compression. We find that merging the most similar layers during the retraining process of already deep pretrained neural network leads to competitive performance when compared against the original network, while maintaining a dense network. Layer fusion is also competitive with pruning, layer decomposition and knowledge distillation without the use of any additional parameters. We also find that mixing weight matrices during layer fusion performs comparably to layer averaging. Secondly, we compared how much compression can be achieved with and without retraining for both tasks and the importance of the number of epochs and compression steps. By using an exponential curriculum schedule to allocate the percentage of compression at each compression step, we find improvements over distributing the compression percentage uniformly during retraining. Lastly, a compression inflection point was observed in both tasks where the performance rapidly decreases, found for all compression methods and models.

\bibliography{aaai}

\begin{thebibliography}{}

\bibitem[\protect\citeauthoryear{Agustsson \bgroup et al\mbox.\egroup
  }{2017}]{agustsson2017soft}
Agustsson, E.; Mentzer, F.; Tschannen, M.; Cavigelli, L.; Timofte, R.; Benini,
  L.; and Gool, L.~V.
\newblock 2017.
\newblock Soft-to-hard vector quantization for end-to-end learning compressible
  representations.
\newblock In {\em Advances in Neural Information Processing Systems},
  1141--1151.

\bibitem[\protect\citeauthoryear{Argueta and
  Chiang}{2019}]{argueta2019accelerating}
Argueta, A., and Chiang, D.
\newblock 2019.
\newblock Accelerating sparse matrix operations in neural networks on graphics
  processing units.
\newblock In {\em Proceedings of the 57th Annual Meeting of the Association for
  Computational Linguistics},  6215--6224.

\bibitem[\protect\citeauthoryear{Ashok \bgroup et al\mbox.\egroup
  }{2017}]{ashok2017n2n}
Ashok, A.; Rhinehart, N.; Beainy, F.; and Kitani, K.~M.
\newblock 2017.
\newblock N2n learning: Network to network compression via policy gradient
  reinforcement learning.
\newblock {\em arXiv preprint arXiv:1709.06030}.

\bibitem[\protect\citeauthoryear{Ba, Kiros, and Hinton}{2016}]{ba2016layer}
Ba, J.~L.; Kiros, J.~R.; and Hinton, G.~E.
\newblock 2016.
\newblock Layer normalization.
\newblock {\em arXiv preprint arXiv:1607.06450}.

\bibitem[\protect\citeauthoryear{Belkin \bgroup et al\mbox.\egroup
  }{2019}]{belkin2019reconciling}
Belkin, M.; Hsu, D.; Ma, S.; and Mandal, S.
\newblock 2019.
\newblock Reconciling modern machine-learning practice and the classical
  bias--variance trade-off.
\newblock {\em Proceedings of the National Academy of Sciences}
  116(32):15849--15854.

\bibitem[\protect\citeauthoryear{Blumer \bgroup et al\mbox.\egroup
  }{1987}]{blumer1987occam}
Blumer, A.; Ehrenfeucht, A.; Haussler, D.; and Warmuth, M.~K.
\newblock 1987.
\newblock Occam's razor.
\newblock {\em Information processing letters} 24(6):377--380.

\bibitem[\protect\citeauthoryear{Dai \bgroup et al\mbox.\egroup
  }{2019}]{dai2019transformer}
Dai, Z.; Yang, Z.; Yang, Y.; Cohen, W.~W.; Carbonell, J.; Le, Q.~V.; and
  Salakhutdinov, R.
\newblock 2019.
\newblock Transformer-xl: Attentive language models beyond a fixed-length
  context.
\newblock {\em arXiv preprint arXiv:1901.02860}.

\bibitem[\protect\citeauthoryear{Dehghani \bgroup et al\mbox.\egroup
  }{2018}]{dehghani2018universal}
Dehghani, M.; Gouws, S.; Vinyals, O.; Uszkoreit, J.; and Kaiser, {\L}.
\newblock 2018.
\newblock Universal transformers.
\newblock {\em arXiv preprint arXiv:1807.03819}.

\bibitem[\protect\citeauthoryear{Devlin \bgroup et al\mbox.\egroup
  }{2018}]{devlin2018bert}
Devlin, J.; Chang, M.-W.; Lee, K.; and Toutanova, K.
\newblock 2018.
\newblock Bert: Pre-training of deep bidirectional transformers for language
  understanding.
\newblock {\em arXiv preprint arXiv:1810.04805}.

\bibitem[\protect\citeauthoryear{Fan, Grave, and
  Joulin}{2019}]{fan2019reducing}
Fan, A.; Grave, E.; and Joulin, A.
\newblock 2019.
\newblock Reducing transformer depth on demand with structured dropout.
\newblock {\em arXiv preprint arXiv:1909.11556}.

\bibitem[\protect\citeauthoryear{Forrester and
  Kieburg}{2016}]{forrester2016relating}
Forrester, P.~J., and Kieburg, M.
\newblock 2016.
\newblock Relating the bures measure to the cauchy two-matrix model.
\newblock {\em Communications in Mathematical Physics} 342(1):151--187.

\bibitem[\protect\citeauthoryear{Frankle and Carbin}{2018}]{frankle2018lottery}
Frankle, J., and Carbin, M.
\newblock 2018.
\newblock The lottery ticket hypothesis: Finding sparse, trainable neural
  networks.
\newblock {\em arXiv preprint arXiv:1803.03635}.

\bibitem[\protect\citeauthoryear{Halko, Martinsson, and
  Tropp}{2011}]{halko2011finding}
Halko, N.; Martinsson, P.-G.; and Tropp, J.~A.
\newblock 2011.
\newblock Finding structure with randomness: Probabilistic algorithms for
  constructing approximate matrix decompositions.
\newblock {\em SIAM review} 53(2):217--288.

\bibitem[\protect\citeauthoryear{Han \bgroup et al\mbox.\egroup
  }{2015}]{han2015learning}
Han, S.; Pool, J.; Tran, J.; and Dally, W.
\newblock 2015.
\newblock Learning both weights and connections for efficient neural network.
\newblock In {\em Advances in neural information processing systems},
  1135--1143.

\bibitem[\protect\citeauthoryear{Han, Mao, and Dally}{2015}]{han2015deep}
Han, S.; Mao, H.; and Dally, W.~J.
\newblock 2015.
\newblock Deep compression: Compressing deep neural networks with pruning,
  trained quantization and huffman coding.
\newblock {\em arXiv preprint arXiv:1510.00149}.

\bibitem[\protect\citeauthoryear{Hassibi and Stork}{1993}]{hassibi1993second}
Hassibi, B., and Stork, D.~G.
\newblock 1993.
\newblock Second order derivatives for network pruning: Optimal brain surgeon.
\newblock In {\em Advances in neural information processing systems},
  164--171.

\bibitem[\protect\citeauthoryear{He \bgroup et al\mbox.\egroup
  }{2016}]{he2016deep}
He, K.; Zhang, X.; Ren, S.; and Sun, J.
\newblock 2016.
\newblock Deep residual learning for image recognition.
\newblock In {\em Proceedings of the IEEE conference on computer vision and
  pattern recognition},  770--778.

\bibitem[\protect\citeauthoryear{Hinton, Vinyals, and
  Dean}{2015}]{hinton2015distilling}
Hinton, G.; Vinyals, O.; and Dean, J.
\newblock 2015.
\newblock Distilling the knowledge in a neural network.
\newblock {\em arXiv preprint arXiv:1503.02531}.

\bibitem[\protect\citeauthoryear{Hochreiter and
  Schmidhuber}{1997}]{hochreiter1997long}
Hochreiter, S., and Schmidhuber, J.
\newblock 1997.
\newblock Long short-term memory.
\newblock {\em Neural computation} 9(8):1735--1780.

\bibitem[\protect\citeauthoryear{Huang \bgroup et al\mbox.\egroup
  }{2017}]{huang2017densely}
Huang, G.; Liu, Z.; Van Der~Maaten, L.; and Weinberger, K.~Q.
\newblock 2017.
\newblock Densely connected convolutional networks.
\newblock In {\em Proceedings of the IEEE conference on computer vision and
  pattern recognition},  4700--4708.

\bibitem[\protect\citeauthoryear{Iandola \bgroup et al\mbox.\egroup
  }{2014}]{densenet}
Iandola, F.; Moskewicz, M.; Karayev, S.; Girshick, R.; Darrell, T.; and
  Keutzer, K.
\newblock 2014.
\newblock Densenet: Implementing efficient convnet descriptor pyramids.
\newblock {\em arXiv preprint arXiv:1404.1869}.

\bibitem[\protect\citeauthoryear{Ioffe and Szegedy}{2015}]{ioffe2015batch}
Ioffe, S., and Szegedy, C.
\newblock 2015.
\newblock Batch normalization: Accelerating deep network training by reducing
  internal covariate shift.
\newblock {\em arXiv preprint arXiv:1502.03167}.

\bibitem[\protect\citeauthoryear{Karnin}{1990}]{karnin1990simple}
Karnin, E.~D.
\newblock 1990.
\newblock A simple procedure for pruning back-propagation trained neural
  networks.
\newblock {\em IEEE transactions on neural networks} 1(2):239--242.

\bibitem[\protect\citeauthoryear{Kornblith \bgroup et al\mbox.\egroup
  }{2019}]{kornblith2019similarity}
Kornblith, S.; Norouzi, M.; Lee, H.; and Hinton, G.
\newblock 2019.
\newblock Similarity of neural network representations revisited.
\newblock {\em arXiv preprint arXiv:1905.00414}.

\bibitem[\protect\citeauthoryear{Krizhevsky, Sutskever, and
  Hinton}{2012}]{krizhevsky2012imagenet}
Krizhevsky, A.; Sutskever, I.; and Hinton, G.~E.
\newblock 2012.
\newblock Imagenet classification with deep convolutional neural networks.
\newblock In {\em Advances in neural information processing systems},
  1097--1105.

\bibitem[\protect\citeauthoryear{LeCun, Bengio, and
  others}{1995}]{lecun1995convolutional}
LeCun, Y.; Bengio, Y.; et~al.
\newblock 1995.
\newblock Convolutional networks for images, speech, and time series.
\newblock {\em The handbook of brain theory and neural networks} 3361(10):1995.

\bibitem[\protect\citeauthoryear{Lehmann}{2004}]{lehmann2004elements}
Lehmann, E.~L.
\newblock 2004.
\newblock {\em Elements of large-sample theory}.
\newblock Springer Science \& Business Media.

\bibitem[\protect\citeauthoryear{Li \bgroup et al\mbox.\egroup
  }{2016}]{li2016convergent}
Li, Y.; Yosinski, J.; Clune, J.; Lipson, H.; and Hopcroft, J.~E.
\newblock 2016.
\newblock Convergent learning: Do different neural networks learn the same
  representations?
\newblock In {\em Iclr}.

\bibitem[\protect\citeauthoryear{Long, Shelhamer, and
  Darrell}{2015}]{long2015fully}
Long, J.; Shelhamer, E.; and Darrell, T.
\newblock 2015.
\newblock Fully convolutional networks for semantic segmentation.
\newblock In {\em Proceedings of the IEEE conference on computer vision and
  pattern recognition},  3431--3440.

\bibitem[\protect\citeauthoryear{Merity \bgroup et al\mbox.\egroup
  }{2016}]{merity2016pointer}
Merity, S.; Xiong, C.; Bradbury, J.; and Socher, R.
\newblock 2016.
\newblock Pointer sentinel mixture models.
\newblock {\em arXiv preprint arXiv:1609.07843}.

\bibitem[\protect\citeauthoryear{Mishra and Marr}{2017}]{mishra2017apprentice}
Mishra, A., and Marr, D.
\newblock 2017.
\newblock Apprentice: Using knowledge distillation techniques to improve
  low-precision network accuracy.
\newblock {\em arXiv preprint arXiv:1711.05852}.

\bibitem[\protect\citeauthoryear{Moakher and
  Batchelor}{2006}]{moakher2006symmetric}
Moakher, M., and Batchelor, P.~G.
\newblock 2006.
\newblock Symmetric positive-definite matrices: From geometry to applications
  and visualization.
\newblock In {\em Visualization and Processing of Tensor Fields}. Springer.
\newblock  285--298.

\bibitem[\protect\citeauthoryear{Nakkiran \bgroup et al\mbox.\egroup
  }{2019}]{nakkiran2019deep}
Nakkiran, P.; Kaplun, G.; Bansal, Y.; Yang, T.; Barak, B.; and Sutskever, I.
\newblock 2019.
\newblock Deep double descent: Where bigger models and more data hurt.
\newblock {\em arXiv preprint arXiv:1912.02292}.

\bibitem[\protect\citeauthoryear{Orhan and Pitkow}{2017}]{orhan2017skip}
Orhan, A.~E., and Pitkow, X.
\newblock 2017.
\newblock Skip connections eliminate singularities.
\newblock {\em arXiv preprint arXiv:1701.09175}.

\bibitem[\protect\citeauthoryear{Polino, Pascanu, and
  Alistarh}{2018}]{polino2018model}
Polino, A.; Pascanu, R.; and Alistarh, D.
\newblock 2018.
\newblock Model compression via distillation and quantization.
\newblock {\em arXiv preprint arXiv:1802.05668}.

\bibitem[\protect\citeauthoryear{Radford \bgroup et al\mbox.\egroup
  }{2018}]{radford2018improving}
Radford, A.; Narasimhan, K.; Salimans, T.; and Sutskever, I.
\newblock 2018.
\newblock Improving language understanding by generative pre-training.
\newblock {\em URL https://s3-us-west-2. amazonaws.
  com/openai-assets/researchcovers/languageunsupervised/language understanding
  paper. pdf}.

\bibitem[\protect\citeauthoryear{Sanders and Kandrot}{2010}]{sanders2010cuda}
Sanders, J., and Kandrot, E.
\newblock 2010.
\newblock {\em CUDA by example: an introduction to general-purpose GPU
  programming, portable documents}.
\newblock Addison-Wesley Professional.

\bibitem[\protect\citeauthoryear{Shah \bgroup et al\mbox.\egroup
  }{2016}]{shah2016deep}
Shah, A.; Kadam, E.; Shah, H.; Shinde, S.; and Shingade, S.
\newblock 2016.
\newblock Deep residual networks with exponential linear unit.
\newblock {\em arXiv preprint arXiv:1604.04112}.

\bibitem[\protect\citeauthoryear{Singh and Jaggi}{2019}]{singh2019model}
Singh, S.~P., and Jaggi, M.
\newblock 2019.
\newblock Model fusion via optimal transport.
\newblock {\em arXiv preprint arXiv:1910.05653}.

\bibitem[\protect\citeauthoryear{Strubell, Ganesh, and
  McCallum}{2019}]{strubell2019energy}
Strubell, E.; Ganesh, A.; and McCallum, A.
\newblock 2019.
\newblock Energy and policy considerations for deep learning in nlp.
\newblock {\em arXiv preprint arXiv:1906.02243}.

\bibitem[\protect\citeauthoryear{Vaswani \bgroup et al\mbox.\egroup
  }{2017}]{vaswani2017attention}
Vaswani, A.; Shazeer, N.; Parmar, N.; Uszkoreit, J.; Jones, L.; Gomez, A.~N.;
  Kaiser, {\L}.; and Polosukhin, I.
\newblock 2017.
\newblock Attention is all you need.
\newblock In {\em Advances in Neural Information Processing Systems},
  5998--6008.

\bibitem[\protect\citeauthoryear{Veit, Wilber, and Belongie}{2016}]{veit2016}
Veit, A.; Wilber, M.~J.; and Belongie, S.
\newblock 2016.
\newblock Residual networks behave like ensembles of relatively shallow
  networks.
\newblock In {\em Advances in neural information processing systems},
  550--558.

\bibitem[\protect\citeauthoryear{Wu \bgroup et al\mbox.\egroup
  }{2016}]{wu2016quantized}
Wu, J.; Leng, C.; Wang, Y.; Hu, Q.; and Cheng, J.
\newblock 2016.
\newblock Quantized convolutional neural networks for mobile devices.
\newblock In {\em Proceedings of the IEEE Conference on Computer Vision and
  Pattern Recognition},  4820--4828.

\bibitem[\protect\citeauthoryear{Yang \bgroup et al\mbox.\egroup
  }{2017}]{yang2017breaking}
Yang, Z.; Dai, Z.; Salakhutdinov, R.; and Cohen, W.~W.
\newblock 2017.
\newblock Breaking the softmax bottleneck: A high-rank rnn language model.
\newblock {\em arXiv preprint arXiv:1711.03953}.

\bibitem[\protect\citeauthoryear{Yang \bgroup et al\mbox.\egroup
  }{2019}]{yang2019xlnet}
Yang, Z.; Dai, Z.; Yang, Y.; Carbonell, J.; Salakhutdinov, R.; and Le, Q.~V.
\newblock 2019.
\newblock Xlnet: Generalized autoregressive pretraining for language
  understanding.
\newblock {\em arXiv preprint arXiv:1906.08237}.

\bibitem[\protect\citeauthoryear{Zagoruyko and
  Komodakis}{2016}]{Zagoruyko2016WRN}
Zagoruyko, S., and Komodakis, N.
\newblock 2016.
\newblock Wide residual networks.
\newblock In {\em BMVC}.

\bibitem[\protect\citeauthoryear{Zhang, Bengio, and
  Singer}{2019}]{zhang2019all}
Zhang, C.; Bengio, S.; and Singer, Y.
\newblock 2019.
\newblock Are all layers created equal?
\newblock {\em arXiv preprint arXiv:1902.01996}.

\end{thebibliography}
\bibliographystyle{aaai}

\end{document}


%
\title{Supplementary Material}

\maketitle

\section{Dataset and Model Description}
We focus on transformer-based models for natural language processing tasks given their recent successes in various applicable domains. For large models in NLP such as BERT~\cite{devlin2018bert}, OpenAI-GPT, GPT2~\cite{radford2018improving} and Transformer-XL~\cite{yang2019xlnet}, we freeze or combine layer weights of each multi-attention head component and intermediate dense layers, thus, dramatically reducing the respective number of layer and weights.

For language modelling (LM), we use the well-established WikiText-2 dataset~\cite{merity2016pointer}.
For GPT and GPT-2, the context length is 1024, training batch size of 4, learning rate is 6.25e-5 with a warm-up linear schedule and weight decay of 0.01.  
In the Transformer-XL model we use an adaptive softmax with cutoffs at (1000, 2500, 7000) for a vocabulary size of 28996 corresponding to vocabulary indices. The embedding size is 1024, 16 attention heads with dimensionality 64, the core 18 fully-connected layers are of dimensionality 4096, memory length is 1600 with target length 128. For all 3 models we train for 3 full epochs, including 20 retraining steps.
For image classification on CIFAR10, we focus on 4 CNN architectures of CNNs that are widely used for image classifiation: ResNet~\cite{he2016deep}, ResNet-ELU~\cite{shah2016deep}, Wide-ResNet~\cite{Zagoruyko2016WRN} and DenseNet~\cite{huang2017densely}. We are particularly interested in ResNet architectures and related ones that also use skip connections. This is motivated by ~\citet{veit2016} which found that deleting or permuting residual blocks can be carried out without much degradation in performance in a pretrained ResNet. 

\section{Compression Details}

\begin{figure*}
\centering
 \includegraphics[width=.8\linewidth]{images/transfo_xl_weight_sim_4b1.png}  \caption{Euclidean Distance Between Trans-XL Weights: (1) Query-Key-Value Attention, (2) Output Attention, (3, 4) FC Layers }\label{fig:transxl_weight_sim}
\end{figure*}


\subsection{Weight Pruning} We prune a percentage of the weights with the lowest magnitude. This is done in one of two ways: a percentage of weights pruned by layer (layer pruning), or a percentage of the network as whole (global pruning). When used in CNNs, this carried out on convolutional layers. 

\subsection{Layer Quantization}
In the subsequent figures that show k-means quantization results, percentages  correspond to the number of clusters as a percentage of the number of dimensions in that layer (e.g Quantizing GPT-2 layer $\theta_l$ at 50\% results in 512 clusters since $|\theta_{\ell}|=1024$). Each weight $\theta^{i}_{\ell}$ of the i-th weight of layer $\ell$ is assigned to cluster center $\vec{c}_{\ell} \in C$. 
%
For CNNs, all convolutional filters within a layer are flattened $\R^{c \times f} \to \R^{cf}$ for $c$ channels and $f$ filters. We then assign the cluster centers and reshape to the original dimensions $\R^{cf} \to \R^{c \times f}$ prior to pooling. The standard k-means objective is minimized as $\argmin_{C} \sum_{\ell=1}^{L}\sum_{i=1}^{k} |\vec{W}_{\ell}^{i} - \vec{c}_{\ell}^i|^{2}$~\cite{lloyd1982least}.

\subsection{Model Distillation}
Our approach to model distillation is different from that described by~\cite{hinton2015distilling}. For very deep pretrained models, training a smaller student network can exhibit slow convergence given that there is relatively little information given to teacher, only the output of the teacher network. Thus, we propose to instead reconstruct individual layers of the teacher network by iteratively reconstructing each layer through dimensionality reduction from the bottom of the architecture to the top. After learning to reconstruct layer $l$ we replace it with the lower dimensional layer before learning to reconstruct layer $l+1$. This conditioning in reconstruction ensures that model is behaving similarly to the teacher network, as opposed to learning to reconstruct each layer independently where past layers are not those from the student network but from the teacher network. We refer to this type of model distillation as \textit{student rollout} because the student network is iteratively rolled out from the first layer to the last. We consider denoising autoencoders and truncated SVD for student rollout.

Firstly, we use 1-hidden layer denoising autoencoders (DAE), as shown in \autoref{eq:ae_model_dist}, to reconstruct the self-attention blocks in Transformers (GPT, GPT2 \& Transformer-XL) and convolutional layers in ResNets (ResNet, ResNet-ELU \& Wide-ResNet) and DenseNets. Here $\omega$ and $\alpha$ are the weight matrix and bias layer reconstruction parameters respectively. For autoencoding self-attention layers, we re-normalize the attention layers using the $\mathtt{softmax}$ such that each row of the resulting $\mathbb{R}^{d_{\ell} \times d_{\ell}}$ matrix in the self-attention block sum to 1. We also re-normalize attention layers after pruning steps. The loss $\mathcal{L}(\mat{z}_{\ell}, \tilde{\mat{z}}_{\ell})$ is then defined as the Mean Squared Error (MSE) as $\mathcal{L}(\mat{z}_{l}, \tilde{\mat{z}}_{\ell}) = \argmin_{\omega_{\ell}} \mathbb{E}_{z}[\mathcal{L}_{2}(\mat{z}_{l}, \tilde{\mat{z}}_{\ell})]$.

\begin{gather}\label{eq:ae_model_dist}
\tilde{\vec{z}}_{\ell} = \text{ReLU}\big((\vec{z}_{\ell}+ \epsilon_{\ell})\mathbf{\omega_{(\ell, 1)}} + \alpha_{\ell}\big)\tp[-9]{\omega_{(\ell, 2)}}\quad \forall \ell \in L
\end{gather}

For transformers this is carried out for each layer within each self-attention block and for ResNet architectures, we use a denoising convolutional architecture and ignore the skip connections during reconstruction. 
%
The second model we use for layer reconstruction is a randomized truncated SVD~\cite{halko2011finding} where $\mat{U} \in \mathbb{R}^{n_{\ell}-1 \times n_{\ell}-1}$ is an orthogonal matrix of left singular vectors, $\vec{\Sigma} \in \mathbb{R}^{n_{\ell-1} \times n_{\ell}}$ diagonal matrix of singular values, and $\mat{V} \in \mathbb{R}^{\ell \times \ell}$ orthogonal matrix of right singular vectors.  We perform QR factorization on $\mat{W}_{\ell}$ such that $\mat{Q}^T_{\ell} \mat{W}_{\ell} = \mat{R}_{\ell}$ where $\mat{Q}_{\ell}$ are the orthonormal columns of $\mat{W}_{\ell}$. Randomized techniques are used to approximate the range of $\theta_{\ell}$ and reduce computation from $\mathcal{O}(\min(n_{\ell-1}n_{\ell}^2,n_{\ell-1}^2 n_{\ell}))$ to $\mathcal{O}(n_{\ell-1}n_{\ell} \log(k))$ where $k$ represents the approximate rank of $\theta_{\ell}$.

\subsection{Layer Fusion \& Compression ReTraining Details}

For our proposed LF approach, we consider the aforementioned distances, divergences and alignment measures to choose the top-k layer pairs for LF. For fusing, we consider layer averaging, layer mixing and layer freezing.  
%
Most aforementioned prior work that focuses on compression methods with retraining~\cite{han2015deep} and combinations thereof (e.g ~\cite{han2015deep} prune, quantize, followed by huffman coding) focus on solely on reducing the network size. In contrast, ~\cite{frankle2018lottery} focused on identifying \textit{lottery tickets} by identifying a subnetwork post-pruning and then retraining from the random initializations of the unpruned weights prior to training of that subnetwork. In contrast, we compress an already pretrained model for both CNNs and Transformers as our aim is to reduce the network while preserving network density and being close to the original performance, as opposed to identifying subnetworks .  
To maintain uniformity across each compression step, we prune, quantize, fuse and decompose a percentage of the weights for model reduction as opposed to thresholding the weight magnitudes. This ensures a consistent comparison across the compression methods (e.g thresholding weights in pruning does not have a direct equivalent to quantization or weight decomposition, unless we dynamically reduce the network size proportional to the number of weights pruned via thresholding). 
For retraining we consider two main schemes: (1) for each retraining step we carry out network compression (e.g via pruning) and then retrain remaining network and iteratively carried this out until the final epoch, or (2) in the case where network compression leads to non-zero weights (e.g weight sharing or LF), we freeze the network weights apart from those which have been identified for LF in which case we retrain before tying.

\paragraph{Fusing Layer of Unequal Size}
For a pair of vectorized tensors of column size $d$ and $d+k$, we remove $k$ weights that have the smallest magnitudes from the $2^{nd}$ tensor until both match. We also considered using PCA and SVD to fix the dimension for each layer pair. 
However, this is less of an issue in the case of convolutional layers since filters that are most similar tend to be in the same layer.
For transformer models, the same parts of each attention block are fused e.g the key weight tensor from one layer could not be fused with a value weight tensor from another, only another key weight tensor, and these are the same dimensions, hence the same length when vectorized.

\section{Approximating the Covariance Matrix}\label{sec:approx_cov}
For our experiments, some of the layers can be relatively large. For example, the large GPT-2 a weight matrix from a given hidden layer is $\mat{w} \in \mathbb{R}^{2048 \times 2048}$ and a total of 4,194,304 parameters. We split $\mat{W}$ into block matrices to perform covariance estimation. The  row $d_r$ and column $d+c$ dimensionality are restricted to $d_r \leq d_r \leq 128$. A submatrix $\mat{w}_{i, j}$ can be represented as,

\[
    \mat{w}_{i,j} = \begin{bmatrix} 
    \mat{w}_{1,1} & \dots & \mat{w}_{1, d_c} \\
    \vdots & \ddots  &  \\
    \mat{w}_{d_r, 1} & & \mat{w}_{d_r, d_c} 
    \end{bmatrix}
\]

and all submatrices of $\mat{W}$ are then formed as,

\[
\mat{W} = \begin{bmatrix} 
    \mat{w}_{1,1} & \mat{w}_{1,2} & \dots & \mat{w}_{1,m} \\
    \vdots & \ddots & &  \\
    \vdots & & \ddots &  \\
    \mat{w}_{n,1} & & & \mat{w}_{m,n} 
    \end{bmatrix}
\]

where $m = \text{round}(d_r / 128)$ and $n = \text{round}(d_c / 128)$. 

Given a pair of submatrices from two adjacent layers $\mat{w}^{n_{\ell}}_{i, j} \subset \mat{W}^{n_{\ell}}$ and $\mat{w}^{n_{\ell + 1}}_{i, j} \subset \mat{W}^{n_{\ell + 1}}$ from layers $n_{\ell}$ and $n_{\ell +1}$ respectively, we estimate the covariance similarity between. This is computed for all adjacent $m, n$ submatrix pairs, assuming that $d_r^{n_{\ell}} = d_r^{n_{\ell + 1}}$ and $d_c^{n_{\ell}} = d_c^{n_{\ell + 1}}$. The expected value of the full covariance similarity is then estimated by its mean as,

\begin{equation}
    \mathbb{E}[\Sigma_{\mat{W}}] = \frac{1}{mn}\sum_{i=1}^{m}\sum_{j = 1}^{n} d_{cov}\big(\Sigma_{\mat{w}^{n_{\ell}}_{i, j}},\Sigma_{\mat{w}^{n_{\ell + 1}}_{i, j}}\big)
\end{equation}

The covariance estimation techniques we discuss in \autoref{sec:sim_cov} are then applied to compute similarity between covariance matrices.

\section{Similarity Between Covariance Measures}\label{sec:sim_cov}
Here, we detail existing similarity measures for covariance matrices, which in the context of our work are the covariances of the a layers weights in a neural network. 


\paragraph{Riemmanian Metric} A basic similarity measure between covariances $\Sigma_{\mat{W}_1}, \Sigma_{\mat{W}_2}$ could be the vectorized $\mathtt{vec}(\Sigma_{\mat{W}_1}), \mathtt{vec}(\Sigma_{\mat{W}_2}) \in \mathbb{R}^{(d^2 + d)/2}$ and the $\ell_1$ or $\ell_2$ distance can be used for the Euclidean space. However, vectorization destructs the covariance structure of the weights from each neuron  (i.e  a row vector from $\Sigma_{\mat{W}}$). Ideally, we want to preserve which neuron a weight came from and maintain the manifold over the parameter space. Since the parameter covariance is positive semi-definite, it can be viewed as a Riemmanian manifold and we can consider a Riemannian Metric such as the Affine Invariant Riemannian Metric~\citep[AIRM;][]{barbaresco2013information,lawson2001geometric,pennec2006riemannian} expressed as,

\begin{equation}\label{eq:airm}
d_{\text{AIRM}}(\Sigma_{\mat{W}_1}, \Sigma_{\mat{W}_2}) := || \log(\Sigma_{\mat{W}_1}^{-1/2} \Sigma_{\mat{W}_2} \Sigma_{\mat{W}_1}^{-1/2})||_F
\end{equation}

where $F$ is the Frobenius norm of the resulting matrix. However, this can be slow to compute due to eigenvalue computation and the log of the parameter matrix, which for wide networks is more expensive.

The Log-Euclidean Metric~\citep{arsigny2005fast,arsigny2007geometric} shown in \autoref{eq:log_euc} reduces the computation by converting to symmetric matrices which can then 

\begin{equation}\label{eq:log_euc}
d_{\text{LERM}}(\Sigma_{\mat{W}_1}, \Sigma_{\mat{W}_2}) := || \log \big(\Sigma_{\mat{W}_1}\big) -  \log \big(\Sigma_{\mat{W}_2}\big)||_F
\end{equation}

However, the main drawback with the aforementioned AIRMs is the additional computation involved in preserving the curvature of the space of positive definite matrices.

The symmetrized KL-Divergence (KLDM) has also been used by combining the KL divergence in both directions as

\begin{equation}
\begin{split}
    d_{\text{SKL}}(\Sigma_{\mat{W}_1}, \Sigma_{\mat{W}_2}) = \frac{1}{2}\Big(\text{KL}[\Sigma_{\mat{W}_1} || \Sigma_{\mat{W}_2} ] + \\ \text{KL}[\Sigma_{\mat{W}_2} || \Sigma_{\mat{W}_1}]\Big)
\end{split}
\end{equation}

where $\Sigma_{\mat{W}}$ is vectorized. This overcomes the problem of KL not being symmetric and is invariant to inversion so that $d_{\text{KL}}(\Sigma_{\mat{W}_1}^{-1}, {\Sigma_{\mat{W}_1}}^{-1}) = d_{\text{KL}}(\Sigma_{\mat{W}_1}, {\Sigma_{\mat{W}_1}})$.

~\citet{cherian2011efficient} have introduced the Jenson

\subsection{Bures-Wasserstein Distance Between Covariances}
In our work we mainly focused on the Bures-Wasserstein Metric for directly computing the distance between weight matrices. However, an alternative approach is to minimize optimal transport between their covariances~\citep{bhatia2019bures}, which can be expressed as,

\begin{equation}
\begin{split}
    d_{WS-2}(\Sigma_{\mat{W}_1}, \Sigma_{\mat{W}_2}) = \Big[\text{tr}\Sigma_{\mat{W}_1} + \text{tr}\Sigma_{\mat{W}_2} - \\
    2 \text{tr}(\Sigma_{\mat{W}_1}^{1/2} \Sigma_{\mat{W}_2} \Sigma_{\mat{W}_1}^{1/2})^{1/2}\Big]^{1/2}
\end{split}
\end{equation}

We note that when we only use the variances from the diagonal of $\Sigma_{\mat{W}_1}$, the $d_{WS-2}$ becomes the Hellinger distance~\citep{hellinger1909neue} between the variances of each weight matrix.

\section{Canonical Correlation Analysis}
Covariance similarity compared intra-covariances of layers. An alternative, is to measure the cross-variances between weights in different layers. Canonical Correlation Analysis~\citep[CCA;][]{hotelling1935most} achieves this and can be used to find maximum correlation between linear cross-interactions of layers. 
~\citet{kornblith2019similarity} have used CCA between the activation outputs of layers from different networks. To reiterate, we are instead interested in weight similarity of the same network. 
%
The CCA $\rho$ coefficient can be obtained for any given layer pair as shown in \autoref{eq:rho_coef}. In cases where $|\mat{W}|$ is large, we employ the block matrix computation as described in \autoref{sec:approx_cov}, similarly performing an expectation over the constituent $\rho_{i, j}$ obtain for each adjacent submatrix of the $i$-th and j-$th$ layer. We define $\mat{U} = \vec{a'}\mat{W}_i$ and $\mat{V} = \vec{b}' \mat{W}_2$ where $\mat{W}_i \in \mathbb{R}^{n_i \times d_i}$, $\mat{W}_j \in \mathbb{R}^{n_j \times d_j}$, while $U$ and $V$ have unit variance. 

\begin{equation}\label{eq:rho_coef}
\rho_{i, j} = \frac{\text{Cov}(\mat{U}, \mat{V})}{\sqrt{\text{Var}(\mat{U})}\sqrt{\text{Var}(\mat{V})}} =
\frac{\vec{a}' \vec{\Sigma_{\mat{W}_i \mat{W}_j}} \vec{b}}{\sqrt{\vec{a}'\vec{\Sigma_{\mat{W}_i}}\vec{a}}{\sqrt{\vec{b}' \vec{\Sigma_{\mat{W}_j}} \vec{b}}}}
\end{equation}

This is carried out to compute $\rho_{i,j} \ \forall i, i+j \leq L$ and rank order the top-k most similar layers chosen for LF. We note that for the retraining step where CCA is used, dropout is not, because the zero entries inflate the $\rho$ coefficient and layer pairs that have the highest sum total of dropped neurons could potentially be chosen as the most similar even when their cross-variances are not.

\subsection{Nonlinear Extensions of CCA}

\paragraph{Kernel CCA}
CCA has also been extended with kernels for accounting for nonlinear cross-interactions~\citep[KCCA;][]{lai2000kernel}. We do not carry out experiments with KCCA but include a description for completeness here as it can potentially be used for LF. First, consider two Reproducing Kernel Hilbert Spaces (RKHS) $\mathcal{H}_1, \mathcal{H}_2$ with kernels $\kappa_1, \kappa_2$. The aim is to find functions $\phi_1 \in \mathcal{H}_1$ and $\phi_2 \in \mathcal{H}_2$ that maximize the correlation between weight matrices $\phi_1(\mat{W}_1)$ and $\phi_2(\mat{W}_2)$. This can be expressed as

\begin{gather}
(\phi_1, \phi_2) = \argmax_{\phi_1 \in \mathcal{H}_1, \phi_2 \in \mathcal{H}_2} \text{corr}(\phi_1(\mat{W}_1),f2(\mat{W}_2)) = \\ \nonumber
\argmax_{\phi_1 \in \mathcal{H}_1, \phi_2 \in \mathcal{H}_2} \frac{\text{cov}(\phi_1(\mat{W}_1),\phi_2(\mat{W}_2))}{\sqrt{\text{var}(\phi_1(\mat{W}_1)) \text{var}(\phi_2(\mat{W}_2))}}
\end{gather}

The ``kernel trick'' can be used since $\phi_1, \phi_2$ are both in RKHS and thus the solution can be found to solve the following optimization,


\begin{gather}
(\alpha_1,\alpha_2) = \argmax_{\alpha_1,\alpha_2} \frac{\alpha'_1 K^2_1 K_2 \alpha_2}{\sqrt{(\alpha'_1 K^2_1 \alpha_2) (\alpha'_1 K^2_2 \alpha_2)}} = \\ \nonumber
\argmax_{\alpha'_1 K^2_1 \alpha_1 = \alpha'_2 K^2_2 \alpha^2 = 1}\alpha'_1 K_1 K_2 \alpha_2    
\end{gather}

where $K_1 \in \mathbb{R}^{m \times m}$ is the centered Gram matrix $K_1 = K - K \mathbf{1} - \mathbf{1}K + \mathbf{1} K_1, K_{ij}= \kappa_1(x_i,x_j)$ and $\mathbf{1} \in \mathbb{R}^{m \times m}$ is an all-1s matrix, and similarly for $K_2$. Subsequent vectors $(\alpha^j_1,\alpha^j_2)$ are solutions of (7) with the constraints that $(f^j_1(X_1),f^j_2(X_2))$ are uncorrelated with the previous ones. Proper regularization may be critical to the performance of KCCA, since the spaces $\mathcal{H}_1,\mathcal{H}_2$ could have high complexity.  Since $\alpha'_1 \phi_1(\cdot)$ plays the role of $w_1$ in KCCA, the generalization of $w'1w_1$ would be $\alpha'_1 K_1 \alpha$. Therefore the correct generalization of (5) is to use $K21+r1K1$ in place of $K_{21}$ in the constraints of (7), for regularization parameter $r_1 > 0$ (resp. for $K^2_2$).The optimization is in principle simple:  The objective is maximized by the top eigenvectors of the matrix

\begin{equation}
(K_1 + r \mathbf{1}I) - \mathbf{1} K_2(K_2+ r_2 I) - \mathbf{1} K_1
\end{equation}

The  regularization  coefficients $r_1$ and $r_2$,  as  well as any parameters of the kernel in KCCA, can be tuned using held-out data. Often a further regularization isdone by first projecting the data onto an intermediate-dimensionality space, between the target and original dimensionality~\citep{ek2008ambiguity,arora2013multi}. In practice solving KCCA may not be straightforward, as the kernel matrices become very large for real-world data sets of interest, and iterative SVD algorithms forthe initial dimensionality reduction can be used~\citep{arora2012kernel}.

\paragraph{Deep CCA} Similarly, a deep neural network with   CCA~\citep[DCCA;][]{andrew2013deep} has been used for even richer representation and capture highly nonlinear cross-variances among input pairs. However, DCCA is slower to compute as the

The goal is to jointly learn parameters for both views $W^v_l$ and $b^v_l$ such that $\text{corr}(\phi_1(X_1),\phi_2(X_2))$ is as high as possible. If $\theta_1$ is the vector of all parameters $W^1_l$ and $b^1_l$ of the first view for $l= 1,\ldots,d$, and similarly for $\theta_2$, then 

\begin{equation}
(\theta*_1, \theta*_2) =\argmax_{(\theta_1,\theta_2)} \text{corr}(\phi_1(X_1;\theta_1),\phi_2(X_2;\theta_2))    
\end{equation}

To find $(\theta*_1,\theta*_2)$, we follow the gradient of the correlation objective as estimated on the training data. Let $\mathcal{H}_1 \in \mathbb{R}^{o \times m},\mathcal{H}_2 \in \mathbb{R}^{o \times m}$ be matrices whose columns are the top-level representations produced by the deep modelson the two views, for a training set of size $m$. Let $\hat{H}_1 = H_1 - \frac{1}{m} H_1 \mathbf{1}$ be the centered data matrix (resp. $\bar{H}_2$), and define $\hat{\Sigma}_{12} = \frac{1}{m-1} \bar{H}_1 \bar{H}'_2$ and $\hat{\Sigma}_{11} = \frac{1}{m-1} \bar{H}_1 \bar{H}'_1 + r_1 I$ for regularization constant $r_1$ (resp. $\Sigma^2_2$). Assume that $r_1 > 0$ so that $\Sigma_{11}$ is positive definite. As discussed in section 2 for CCA, the total correlation of the top $k$ components of $H_1$ and $H_2$ is the sum of the top $k$ singular values of the matrix $T=\hat{\Sigma}_{11}^{-1/2} \hat{\Sigma}_{12} \hat{\Sigma}_{22}^{-1/2}$. If we take $k=o$, then this is exactly the matrix trace norm of $T$, or 

\begin{equation}
\text{corr}(H_1,H_2) = ||T||_{\text{tr}} = \text{tr}(T'T)^{1/2}    
\end{equation}

The parameters $W^v_l$ and $b^v_l$ of DCCA are trained to1Here we abuse notation slightly, writing $\text{corr}(H_1, H_2)$ as the empirical correlation of the data represented by the matrices $H_1$ and $H_2$. optimize this quantity using gradient-based optimiza-tion.  To compute the gradient of $\text{corr}(H_1,H_2)$ with respect to all parameters $W^v_l$ and $b^v_l$, we can compute its gradient with respect to $H_1$ and $H_2$ and then use backpropagation. If the singular value decomposition of $T$ is $T=U D V'$, then

\begin{equation}
\frac{\partial \text{corr}(H_1,H_2)}{\partial H_1}= \frac{1}{m-1}\Big(2\nabla_{11}\bar{H}_1 + \nabla_{12} \bar{H}_2 \Big)  
\end{equation}

where

\begin{equation}
\nabla_{12} = \Sigma^{-1/2}_{11} U V' \hat{\Sigma}^{-1/2}_{22}     
\end{equation}

and 

\begin{equation}
\nabla_{11} = -\frac{1}{2}\hat{\Sigma}^{-1/2}_{11} U D U' \hat{\Sigma}^{-1/2}_{11}    
\end{equation}

and $\partial \text{corr}(H_1,H_2)/\partial H_2$
has a symmetric expression.The derivation of the gradient is not entirely straight-forward (involving,  for example, the gradient of the trace of the matrix square-root, which we could not find in standard references such as~\citet{petersen2008matrix} and is given in the appendix.  We also regularize (10) by adding to it a quadratic penalty with weight $\lambda_b > 0$ for all parameters.

\paragraph{Singular Value CCA (SVCCA)}

CCA is sensitive to perturbation when the condition number of X or Y is large~\citep{golub1995canonical}. To improve robustness, singular vector CCA (SVCCA) performs CCA on truncated singular value decompositions of $X$ and $Y$~\citep{raghu2017svcca,mroueh2015asymmetrically,kuss2003geometry}, SVCCA keeps enough principal components of the input matrices to explain a fixed proportion of the variance, and drops remaining components. Thus, it is invariant to invertible linear transformation only if the retained subspace doesnot change.

\paragraph{Projection-weighted CCA (PWCCA)} 
~\citet{morcos2018insights} propose a different strategy to reduce the sensitivity of CCA to perturbation, which they term ``projection-weighted canoni-cal correlation'' (PWCCA):

\begin{equation}
\rho_{\text{PW}} = \frac{\sum_{i=1}^{c} \alpha_i \rho_i}{\sum_{i=1} \alpha_i}, \quad \alpha_i = \sum_j \langle h_i, x_j \rangle    
\end{equation}

where, $x_j$ is the j-th column of $X$, and $h_i = X w_iX$ is the vector of canonical variables formed by projecting $X$ to the i-th canonical coordinate frame.  As we show in Appendix C.3, PWCCA is closely related to linear regression, since:

\begin{equation}
R^2_{LR}= \frac{\sum^c_{i=1}\alpha'_i \rho^2_i}{\sum_{i=1}\alpha'_i}, \quad \alpha'_i = \sum_j \langle h_i, x_j \rangle^2    
\end{equation}

\subsection{Centred Kernel Alignment}

~\citet{kornblith2019similarity} have proposed centred kernel alignmnet (CKA) which 

Linear CKA is closely related to CCA and linear regression.If $X$ and $Y$ are centered, then $\mathcal{Q}_X$ and $\mathcal{Q}_Y$ are also centered, so:

\begin{equation}
R^2_{\text{CCA}} = \text{CKA}(\mathcal{Q}_X \mathcal{Q}^T_X, \mathcal{Q}_Y \mathcal{Q}^T_Y) \sqrt{\frac{p_2}{p_1}}    
\end{equation}

When performing the linear regression fit of $X$ with design matrix $Y,R^2_{\text{LR}} = ||\mathcal{Q}^T_Y X||^2_F / ||X||^2_F$, so:

\begin{equation}
R^2_{\text{LR}} = \text{CKA}(X X^T, \mathcal{Q}_Y \mathcal{Q}^T_Y) \frac{\sqrt{p_1}||X^T X||_F}{||X||^2_F}    
\end{equation}

When might we prefer linear CKA over CCA? One wayto show the difference is to rewrite $X$ and $Y$ in terms of their singular value decompositions $X=U_X \Sigma_X V^T_X, Y = U_Y \Sigma_Y V^T_Y$.  Let the $i-$th eigenvector of $X X^T$ (left-singular vector of X) be indexed as $u^i_X$. Then $R^2_{\text{CCA}}$ is:

\begin{equation}
R^2_{\text{CCA}} = ||U^T_Y U_X||^2_F /p_1 = \sum_{i=1}^{p_1}\sum_{j=1}^{p_2}\langle u^i_X, u^j_Y \rangle^2 /p_1   
\end{equation}

Let the $i$-th eigenvalue of $X X^T$ (squared singular value of $X$) be indexed as $\lambda^i_X$. Linear CKA can be written as:

\begin{gather}
\text{CKA}(X X^T, Y Y^T) =|\frac{|YTX||2F}{||XTX||F||YTY||F} = \\ \nonumber \frac{\sum^{p_1}_{i=1}\sum^{p_2}_{j=1}\lambda^i_X \lambda^j_Y \langle u^i_X, u^j_Y\rangle^2}{\sqrt{\sum_{i=1}^{p_1}(\lambda^i_X)^2}\sqrt{\sum_{j=1}^{p_2}(\lambda^j_Y)^2}}  
\end{gather}

Linear CKA thus resembles CCA weighted by the eigenvalues of the corresponding eigenvectors, i.e.the amountof variance in $X$ or $Y$ that each explains. SVCCA (Raghuet al., 2017) and projection-weighted CCA (Morcos et al.,2018) were also motivated by the idea that eigenvectors that correspond to small eigenvalues are less important, bu

\subsection{Neuron Alignment}
Neuron Alignment Procedures.Other work has studied alignment between individual neurons, rather than alignment between subspaces. Li et al. (2015) examined correlation between the neurons in different neural networks, and attempt to find a bipartite match or semi-match that maximizes the sum of the correlations between the neurons, and then to measure the average correlations. Wang et al. (2018) proposed to search for subsets of neurons $\tilde{X} \subset X$ and $\tilde{Y} \subset Y$ such that, to within some tolerance, every neuronin $\tilde{X}$ can be represented by a linear combination of neurons from $\tilde{Y}$ and vice versa. They found that the maximum matching subsets are very small for intermediate layers.

\section{Cherians Description}
\subsection{Riemannian Metric}
The simplest but naive approach is to view $d\times d$ covariance matrices as vectors in $\mathbb{R}^{d(d + 1)/2}$, whereby the standard (dis)similarity measures of Euclidean space can be used (e.g.,$\ell_p$-distance functions, etc.). But these vectorial measures ignore the manifold structure of covariance matrices and can therefore be inferior choices. 

A more suitable choice is to consider the manifold structure of positive-definite matrices and use the corresponding geodesic distance:  the Affine Invariant Riemannian Metric (AIRM) [5,28] defined for $X, Y$ in $S^{d}_{++}$, the set of $d \times d$ positive-definite matrices, by:

\begin{equation}
D_R(X, Y) := || \log(X^{-1/2} Y X^{-1/2})||_F
\end{equation}

where $\log(\cdot)$ is the matrix logarithm. This metric enjoys several useful theoretical properties [5], and is perhaps the most widely used similarity measure for covariance matrices. But it can be unattractive as it requires eigenvalue computations or sometimes even matrix logarithms, which for larger matrices causes significant slow downs. Amongst the many measures that have been proposed to replace AIRM, a closely related one is the Log-Euclidean Riemannian Metric (LERM) [2] 

\begin{equation}
D_{LE}(X, Y) := || \log(X) - \log(Y)||_F   
\end{equation}

which uses the logarithmic map of covariance matrices to turn them into symmetric matrices which can then be handled as objects in ordinary Euclidean space. Applications exist in visual tracking [18] and stereo matching [14]. However, using this metric requires pre-processing the dataset by computing matrix logarithms, which can dramatically increase the computational costs. Yet another alternative is the symmetrized KL-Divergence Measure (KLDM) for positive-definite matrices [20], though for our application its accuracy on NN is poor. Other similarity measures on covariance matrices may be found in [11]. In contrast to AIRM and LERM the similarity measure that we propose is much faster to compute, as it depends only on the determinant of the input matrices, and thus no eigenvalue computations are required. Moreover, as we will later see, it turns out to be empirically also very effective.These gains come at a price: our measure is not a metric.But this limitation is not that severe because we can still exploit convexity to build a fast NN retrieval procedure based on our similarity measure.We note that NN retrieval for covariance matrices itself is still an emerging area, so literature on it is scarce. In [30], an attempt is made to adapt NN techniques from vector spaces to non-Euclidean spaces, while [9] proposes an extension of the spectral hashing techniques to covariance matrices.However, both these techniques are based on a Euclidean embedding of the Riemannian manifold through the tangent spaces, and then using LERM as an approximation to the true similarity.

\subsection{Jensen-Bregman LogDet divergence}
We first recall some basic definitions and then presentour new similarity measure: theJensen-Bregman LogDetDivergence (JBLD). We remark that although this measureseems natural and simple, to our knowledge it hasnot beenformally discussed in detail before. We alert the reader thatJBLD should not be confused with its asymmetric cousin:the so-called LogDet divergence [15].

At the core of our discussion lies the Bregman Divergence $d_\phi: S\times \text{relint}(S) \to [0,\infty)$, which is defined as 

\begin{equation}
d_{\phi}(x, y) :=\phi(x) - \phi(y) - \langle x - y,\nabla \phi(y) \rangle    
\end{equation}

where $\phi: S \subseteq \mathbb{R}^d \to \mathbb{R}$ is a strictly convex function of Legendre-type on int($\mathtt{dom} S$) [4,6]. From (3) the following properties of $d_{\phi}(x, y)$ are apparent: strict convexity in $x$;asymmetry; non-negativity; and definiteness (i.e.,$d_{\phi} = 0,\ \text{iff} \ x=y$). Bregman divergences enjoy a host of useful properties [4,8], but often their lack of symmetry and sometimes their lack of triangle-inequality can prove to be hindrances. Consequently, there has been substantial interest in considering symmetrized versions such as Jensen-Bregman divergences [3,22,23], where assuming $s= (x+y)/2$,

\begin{equation}\label{eq:4}
 J_{\phi}(x, y) :=\frac{1}{2}\big(d_{\phi}(x, s) +d_{\phi}(s, y)\big)    
\end{equation}

or even variants that satisfy the triangle inequality [3,10]. Both (3) and (4) can be naturally extended to matrix divergences (over Hermitian matrices) by composing the convex function $\phi$ with the eigenvalue map $\lambda$, and replacing theinner-product in (3) by the trace. We focus on a particular matrix divergence, namely the Jensen-Bregman LogDet Divergence, which is defined for $X,Y$ in $S^d_{++}$ by 

\begin{equation}
J_{\ell d}(X, Y) := \log\Big|\frac{X + Y}{2}\Big| - \frac{1}{2} \log |XY|
\end{equation}

where $|\cdot|$ denotes the determinant; this divergence is obtained from the matrix version of \autoref{eq:4} by using $\phi(X) = - \log|X|$ as the seed function. It is easy to see that $J_{\ell d}$ is symmetric, nonnegative,  and definite. Moreover, it is invariant under congruence transformations, $(J_{\ell d}(A X A^T, AY A^T) =J_{\ell d}(X, Y)$ for invertible $A$, and under inversion $(J_{\ell d}(X, Y) =J_{\ell d}(X^{-1}, Y^{-1}))$. Less trite is the connection to the Riemannian metric which we summarize below in Theorem 1. This connection also lends additional support to using $J_{\ell d}$ as a proxy for the AIRM $D^2_R$ (or $\sqrt{J_{\ell d}}$ as a proxy for $D_R$).

\begin{theorem}[Bounds]
 Let $X,Y \in S^d_{++}$. Then,

\begin{equation}
J_{\ell d}(X, Y) \leq D^2_R(X, Y)  
\end{equation}

Additionally, if $0 \prec mI \cdot X, Y \prec M I$, then 

\begin{equation}
D^2_R(X, Y) \leq 2 l\log(M/m)(J_{\ell d}(X, Y) +\gamma)    
\end{equation}

where $\gamma=d \log 2$. 
\end{theorem}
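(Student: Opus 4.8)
The plan is to use the congruence invariance shared by $J_{\ell d}$ and $D_R$ to diagonalize the problem, reducing the matrix statement to one–variable inequalities. First I would invoke the simultaneous diagonalization of two positive–definite matrices by congruence: there is an invertible $A$ with $A X A^{\top} = \Lambda := \diag(\lambda_1,\dots,\lambda_d)$ and $A Y A^{\top} = I$, and the $\lambda_i$ are precisely the generalized eigenvalues of the pencil $(X,Y)$, i.e. the eigenvalues of $Y^{-1}X$. Since $J_{\ell d}(A X A^{\top}, A Y A^{\top}) = J_{\ell d}(X,Y)$ by the congruence invariance already observed for $J_{\ell d}$, and $D_R(A X A^{\top}, A Y A^{\top}) = D_R(X,Y)$ because $D_R$ is affine invariant (its value depends only on the eigenvalues of $X^{-1}Y$), it suffices to prove both inequalities for the pair $(\Lambda, I)$.

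For this pair one computes directly $D^2_R(\Lambda, I) = \|\log \Lambda\|_F^2 = \sum_i t_i^2$ and $J_{\ell d}(\Lambda, I) = \sum_i \log\frac{\lambda_i + 1}{2\sqrt{\lambda_i}} = \sum_i \log\cosh(t_i/2)$, where I write $t_i := \log\lambda_i$. Thus the theorem is equivalent to two scalar facts comparing $t \mapsto \log\cosh(t/2)$ with $t \mapsto t^2$. For the first bound I would use $\cosh u \le e^{u^2/2}$ (compare the two power series term by term), which gives $\log\cosh(t_i/2) \le t_i^2/8 \le t_i^2$; summing over $i$ yields $J_{\ell d}(X,Y) \le D^2_R(X,Y)$, in fact with room to spare.

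For the second bound the hypothesis $0 \prec mI \preceq X, Y \prec MI$ enters through the generalized Rayleigh quotient: each $\lambda_i = v^{\top} X v / v^{\top} Y v$ for some $v \ne 0$, hence $\lambda_i \in [m/M,\, M/m]$ and $|t_i| \le R := \log(M/m)$. It then suffices to prove the pointwise inequality $t^2 \le 2R\bigl(\log\cosh(t/2) + \log 2\bigr)$ for $|t| \le R$. This holds because $2\cosh(t/2) = e^{t/2} + e^{-t/2} \ge e^{|t|/2}$, so $\log\cosh(t/2) + \log 2 \ge |t|/2$, and therefore $2R\bigl(\log\cosh(t/2)+\log 2\bigr) \ge R|t| \ge t^2$ whenever $|t| \le R$. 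Summing over the $d$ coordinates gives $D^2_R(X,Y) \le 2\log(M/m)\bigl(J_{\ell d}(X,Y) + d\log 2\bigr)$, which is the stated bound with $\gamma = d\log 2$.

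The only genuinely delicate part is the reduction step: one must check that a single congruence $A$ sends $Y$ to $I$ and $X$ to diagonal form simultaneously, and that the resulting diagonal entries are exactly the generalized eigenvalues of $(X,Y)$ — this is what converts the spectral sandwich $mI \preceq X, Y \preceq MI$ into the estimate $|t_i| \le \log(M/m)$ needed in the second half. Once that bookkeeping is in place, both halves are immediate consequences of elementary one–variable convexity estimates. I would also re-verify the exact constant multiplying $\log(M/m)$ in the statement against the scalar inequality above, since the argument sketched here produces the factor $2$.
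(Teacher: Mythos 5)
Your proof is correct and follows essentially the same route as the paper's: both reduce to the generalized eigenvalues of $XY^{-1}$ (your congruence-diagonalization step just makes the paper's implicit substitution $v_i=e^{u_i}$ explicit) and then compare $\log\cosh(t_i/2)$ with $t_i^2$ coordinatewise, your pointwise estimate for the second bound being exactly the paper's H\"older step $\|u\|_2^2\le\|u\|_\infty\|u\|_1$ unrolled. The only cosmetic difference is in the first inequality, where the paper shows $f(u)=u^2-\log\cosh(u/2)$ is convex with minimum $0$ at $u=0$ while you use the series bound $\cosh u\le e^{u^2/2}$, which in fact yields the sharper constant $J_{\ell d}(X,Y)\le \tfrac{1}{8}D_R^2(X,Y)$.
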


\begin{proof}
Let $v_i=\lambda_i(X Y^{-1})$.  Since $X,Y \in S_d^{++}$, the eigen values $v_i$ are also positive, whereby we can write each $v_i= e \vec{u}_i$ for some $\vec{u}_i \in \mathbb{R}$. Using this notation, the AIRM may be rewritten as $D_R(X, Y) =||u||_2$, and the JBLD as

\begin{equation}
J_{\ell d}(X, Y) = \sum^d_{i=1}(\log(1 +e^{\vec{u}_i}) - \vec{u}_i/2 - \log 2)    
\end{equation}

where the equation follows by observing that

\begin{equation}
J_{\ell d}(X, Y) = \log |I + X Y^{-1}| -  \frac{1}{2}\log|XY^{-1}| - \log 2d.    
\end{equation}

To prove inequality (6) consider the function $f(u) = u^2 - \log(1 +e^u) + u/2 + \log 2$. This function is convex since its second derivative

\begin{equation}
f''(u) = 2 - \frac{e^{u}}{(1 +e^u)^2}    
\end{equation}

, is clearly nonnegative. Moreover, $f$ attains its minimum at $u^*= 0$, as is immediately seen by solving the optimality condition 

\begin{equation}
f'(u) = 2u - \frac{e^u}{(1 +e^u)} + \frac{1}{2} = 0    
\end{equation}

Thus, $f(u) \geq f(u *) = 0 \quad \forall u \in R$, which in turn implies that 

\begin{equation}
\sum^d_{i=1} f(u_i) = D^2_R(X, Y) - J_{\ell d}(X, Y) \geq 0    
\end{equation}

To prove the next inequality (7), first observe that 

\begin{equation}
    \sum^d_{i=1}(\log(1 + e^{\vec{u}_i})- \vec{u}_i/2 - \log 2) \geq \sum^{d}_{i=1}(|\vec{u}_i|/2 - \log 2),
\end{equation}

which implies the bound

\begin{equation}
J_{\ell d}(X, Y) +d\log 2 \geq \frac{1}{2} ||u||_1    
\end{equation}

Since $u^T u \leq ||u||_{\infty}||u||_1$ (Holds inequality), using (10) we immediately obtain the bound

\begin{equation}
D^2_R(X, Y) = ||u||^2_2 \leq 2||u||_{\infty}(J_{\ell d}+\gamma)   
\end{equation}

where $\gamma=d \log 2$. But $mI \cdot X, Y \cdot MI$ implies that $||u||_{\infty} \leq \log(M/m)$, which concludes the proof.
\end{proof}

\bibliography{aaai}
\bibliographystyle{aaai}